\newcommand{\ie}{\emph{i.e.}} 
\newcommand{\eg}{\emph{e.g.}} 
\newcommand{\etc}{\emph{etc}} 
\newcommand{\indep}{\rotatebox[origin=c]{90}{$\models$}}
\newtheorem{theorem}{Theorem}
\newtheorem{definition}[theorem]{Definition}
\newtheorem{lemma}[theorem]{Lemma}
\newtheorem{corollary}[theorem]{Corollary}
\newtheorem*{remark*}{Remark}
\newcommand{\sgn}{\mathrm{sgn}}
\newcommand{\w}{\ensuremath \mathbf{w}}
\newcommand{\x}{\ensuremath \mathbf{x}}
\newcommand{\z}{\ensuremath \mathbf{z}}
\newcommand{\punt}[1]{}
\theoremstyle{remark}
\newcommand{\vz}{\mathbf{z}}
\newcommand{\bq}{\begin{equation}}
\newcommand{\eq}{\end{equation}}
\newcommand{\ba}{\begin{eqnarray}}
\newcommand{\ea}{\end{eqnarray}}
\newcommand{\remove}[1]{}
\newcommand{\Nrm}{\mathcal{N}}
\newcommand{\algoref}[1]{Algorithm~\ref{algo:#1}}  
\newcommand{\Dat}{\mathcal{D}}
\definecolor{gray}{rgb}{0.5, .5, .5}
\begin{document}

%
\runningtitle{Privacy-Preserving Inverse Probability Weighting}

%

\twocolumn[

\aistatstitle{Privacy-Preserving Causal Inference \\
via Inverse Probability Weighting}

\aistatsauthor{Si Kai Lee\textsuperscript{\normalfont 1}$^*$ \And Luigi Gresele \textsuperscript{\normalfont 1,2} \And Mijung Park \textsuperscript{\normalfont 1, 3} \And Krikamol Muandet\textsuperscript{\normalfont 1}}
\aistatsaddress{\textsuperscript{1}Max Planck Institute for Intelligent Systems, T\"ubingen, Germany\\
                \textsuperscript{2}Max Planck Institute for Biological Cybernetics, T\"ubingen, Germany\\
                \textsuperscript{3}University of T\"ubingen, T\"ubingen, Germany}]

\begin{abstract}
%
The use of inverse probability weighting (IPW) methods to estimate the causal effect of treatments from observational studies is widespread in econometrics, medicine and social sciences.
%
%
%
%
Although these studies often involve sensitive information, thus far there has been no work on privacy-preserving IPW methods. 
%
%
We address this by providing a novel framework for privacy-preserving IPW (PP-IPW) methods.
We include a theoretical analysis of the effects of our proposed privatisation procedure on the estimated average treatment effect, 
%
%
%
%
and evaluate our PP-IPW framework on synthetic, semi-synthetic and real datasets. 
The empirical results are consistent with our theoretical findings.
\end{abstract}

\section{INTRODUCTION}

The increasing ubiquity of machine learning in our daily lives has created a pressing need for trustworthy artificial intelligence (AI).
One key tenet of trustworthy AI, defined in the European Commission's Ethics Guidelines for Trustworthy AI \cite{EU19:Ethics}, is \emph{privacy}.
Preserving patient privacy is critical in medicine as it builds trust and fosters thoughtful decision making, which in turn helps improve patient care. 
Although the privacy requirements in the medical field are especially high, such requirements are not unique. 
In observational studies for social sciences, it is commonly assumed that sensitive information such as employment, education and criminal records that are used in analyses would be kept private. 
The datasets compiled from these studies often contain personal information, hence conclusions drawn from statistical analyses on such datasets runs the risk of violating the privacy of those present in the datasets.
Such risk extends to \emph{causal inference} methods~\cite{Glass13:CIPH,Imbens15:CIS,Shiffrin16:CIBig,Bareinboim16:Fusion} as they fall under the umbrella of statistical estimation tools.
   
We illustrate the problem of causal inference with an example.
In medicine, it is crucial to have a well-rounded understanding of the efficacy of different medical treatments since a treatment can produce a broad range of responses across patients and multiple treatments are usually available. 
The treatment effect can be modelled from observational data, by looking at how patients responded to different treatments in the past. However, two formidable obstacles need to be overcome in order to obtain a sound \emph{causal} effect estimation.
First, for each patient we only observe the outcome associated with the treatment that patient received, and no other~\cite{Holland86:FPCI}.
Second, the treatments that patients receive are not assigned at random, as doctors assign the treatment they expect to work best for each patient; as a result, treatment assignments and outcomes are subject to \emph{confounding}, which could result in a biased estimate of the outcome of each treatment.  
An accurate estimate of the treatment effect should take confounding into account, possibly modelling how individual characteristics of the patient determine the assigned treatment. 
However, this often requires collecting sensitive information from patients.

The \emph{propensity score} is arguably one of the most used quantities in causal inference for observational studies. 
It forms the basis of popular techniques such as matching, stratification, and inverse probability weighting (IPW) \cite{austin2011introduction, Rosenbaum83:PS, rosenbaum1984reducing, rosenbaum1985constructing} which are extensively used in econometrics, medicine, and social sciences
\cite{Imbens15:CIS}. 
Moreover, the IPW estimator based on propensity scores is the backbone of several counterfactual inference algorithms in the machine learning literature \citep{Dudik11:DoublyRobust,Bottou13:Counterfactual,Swaminathan15:CRM,Swaminathan17:Slate}.
Despite its widespread use, the propensity score---defined as the probability of assignment of a particular treatment given observed covariates---could depend on sensitive information about the patients, such as age, gender, race, and ethnicity.
Little concern has thus far been raised regarding the privacy issues related to the use of propensity scores in such methods.
Previously, propensity scores have been used by \cite{Rassen10:Pooling} as privatised substitutes for individual covariates. 
However, as we show in Section \ref{sec:det-ate}, this method still violates privacy \cite{dwork2014algorithmic} since sensitive observational data is used for \textit{estimating} propensity scores.


Inverse probability weighting (IPW) methods, which employs estimated propensity scores, are frequently used to estimate the average treatment effect (ATE) \cite{ robins2000marginal, rosenbaum1987model}. Since estimating the ATE with IPW methods still require sensitive observational data, privacy is further violated.

To address this important but neglected issue, we develop a novel framework for privacy-preserving IPW methods. This framework consists of two steps: (1) we learn a privacy-preserving propensity score estimator
and (2) we output a privacy-preserving ATE estimator. 
In addition, we investigate the effect of privatisation in both steps on the performance of the resulting causal analysis.

\vspace{-0.8em}
\paragraph{Related Work.} 
To the best of our knowledge, this paper is the first work which formally investigate the privatisation of the propensity score function and the average treatment effect estimates with IPW methods.
There have only been few prior attempts to privatise causal inference techniques in different contexts. 
For example, in \cite{KusnerSSW16}, the authors demonstrated how one could privatise statistical dependence scores such as \textit{Spearman's $\rho$} and \textit{Kendall's $\tau$} under the additive noise model.
The main focus of \cite{KusnerSSW16} is to obtain privatised scores that still can correctly identify the causal direction between two random variables.
In \cite{8166616}, the authors developed a differentially private constraint-based causal graph discovery method for categorical data. 
None of the above papers considered propensity score-based causal inference methods which is our focus.  
Since propensity score-based methods are fundamental to econometrics, medicine, social sciences etc., we expect this work to impact diverse fields.

\vspace{-0.8em}
\paragraph{Contributions.}
In this paper, we propose a privacy-preserving framework for IPW methods which comprises a propensity score estimator and an IPW-based average treatment effect estimator
We privatise the parameters of the logistic regression model used to estimate the propensity scores, as well as the output of inverse probability weighting. 
This guarantees the privacy of the individuals in the training dataset used to learn the propensity score estimator and individuals in the estimation dataset used to estimate the ATE. 
We analyse the \textit{effect of the noise added to enforce privacy} on the resulting estimated causal effect. 
Our analysis provides guidelines on how many samples we need to guarantee a certain level of privacy while providing accurate causal inference. 
We test our method on synthetic, semi-synthetic and real-world datasets to illustrate its effectiveness.

The rest of this paper is organised as follows. Section \ref{sec:background} provides a review of propensity scores in causal inference and differential privacy. 
The privacy-preserving propensity scores, IPW estimator, as well as their theoretical guarantees are then presented in Section \ref{sec:det-ate}, followed by experimental results in Section \ref{sec:experiments}. 
Finally, Section \ref{sec:conclusion} concludes the paper.

\section{BACKGROUND}
\label{sec:background}
In this section, we introduce relevant concepts from causal inference and differential privacy.
The key quantities are summarised in Table \ref{tab:notation}.

\begin{table}[t!]
    \centering
    \caption{Key Quantities}
    \label{tab:notation}
    \begin{tabular}{cl}
         \toprule
         {\small\textbf{Symbol}} & {\small\textbf{Description}} \\
         \midrule
         $\epsilon$ & Privacy loss \\
         $\delta$ & Failure probability \\
         $\mu_t$ & Population mean under treatment $t$ \\
         $\hat{\mu}_t$ &  Estimated $\mu_t$ \\
         $\hat{\mu}_t^{\epsilon}$ & Privatised $\hat{\mu}_t$ \\
         $\tau$ & Average Treatment Effect (ATE), $\mu_1 - \mu_0$\\
         $\hat{\tau}$ & ATE estimate, $\hat{\mu}_1 - \hat{\mu}_0$\\
         $\hat{\tau}_n$ & Partially privatised ATE estimate\\
         $\hat{\tau}_n^{\epsilon}$ & Fully privatised ATE estimate\\
         $\pi_{\w}$ & Propensity score function \\
         $\pi_{\hat{\w}}$ & Estimated $\pi_{\w}$ \\
         $\pi_{\hat{\w}}^{\epsilon}$ & Privatised $\pi_{\hat{\w}}$ \\
         \bottomrule
    \end{tabular}
\end{table}

\subsection{Propensity Scores in Causal Inference}
\label{sec:ppci}
The \emph{potential outcomes} framework is one of the most widely-used approaches in causal inference \cite{Neyman1923:Causal, Rubin74:Causal, Rubin05:PO}. 
It provides the mathematical basis for estimating the outcome of an experiment which has not been performed given outcomes observed under other experimental settings.

Consider the setting where we want to estimate whether a given treatment has a positive, negative or null effect on different units/individuals.
We define $T$ as the treatment variable and $Y_t$ as the random variable representing the potential outcome associated with treatment $T=t$.
In medicine, $T$ could represent different cancer treatments and $Y_t$ an indicator for patient recovery after treatment $t$. 
Throughout this paper, we focus on the binary treatment setting, \ie, $T\in\{0, 1\}$, and refer to the subset of the population with $T=1$ as the \textit{treatment} group, and the rest, with $T=0$, as the \textit{control} group. 
The random variables $Y_1$ and $Y_0$ are the outcomes associated with the treatment and control group, respectively. 

The question we want to answer is: \emph{what is the effect of administering a treatment to a unit compared to not doing so?}
Quantitatively, this can be characterised by the differences of the outcome when the treatment is administered and the outcome when the treatment is not administered, \ie, $Y_1 - Y_0$.
To estimate this, we would require both the outcomes of treatment and no treatment to be observed for every unit. 
However, for each unit we can observe only either $Y_1$ or $Y_0$. 
In practice, we substitute each unobserved quantity with an estimate of its expected outcome, $\mu_t := \mathbb{E}[Y_t]$, and evaluate the \textit{average treatment effect} (ATE) with $\tau = \mathbb{E}[Y_1] - \mathbb{E}[Y_0]$. 
Given a dataset $\mathcal{D}=\{(t_1,y_1),\ldots,(t_N,y_N)\}$, we can approximate $\hat{\mu}_t$ with $n_t^{-1}\sum_{i=1}^N \mathbbm{1}(t_i=t)\cdot y_i$, where $\mathbbm{1}(\cdot)$ is an indicator that returns 1 when $t_i=t$ and 0 otherwise.

If $\mathcal{D}$ is collected from a randomised experiment, $\hat{\tau} := \hat{\mu}_1-\hat{\mu}_0$ is an unbiased estimate of $\tau$. 
However, the problem with most observational studies is that $\hat{\tau}$ is generally biased because of potential \emph{confounding} variables $X$ that affect both $T$ and $Y_t$. 
For example, $X$ could be the current stage of a patient's cancer, which could influence both the decision of the physician regarding the treatment and the outcome of the treatment.
To obtain an unbiased ATE estimate despite the confounding variables $X$ for dataset $\mathcal{D}=\{(\x_i,t_i,y_i)\}_{i=1}^N$, we expand each $\mathbb{E}[Y_t]$ as $\mathbb{E}[Y_t] = \mathbb{E}_X[\mathbb{E}_{t}[Y_t|X,T=t]]$ and compute the difference of the quantity above with $t=1$ and with $t=0$. 
The validity of this estimate can be assessed given three technical requirements, which we assume throughout:
\begin{enumerate}[label={\bf (\roman*)}]
\item \textbf{Stable Unit Treatment Value Assumption (SUTVA):} The observed outcome of the $\mathrm{i}^{th}$ unit $Y(i)$ is unaffected by the assigned treatment to other units. 
\item \textbf{Ignorability:} $T \,\indep\, (Y_0,Y_1)\,|\, X$.
\item \textbf{Positivity:} $0 < \mathbb{P}(T=1|X=\x) < 1$ for all $\x$.
\end{enumerate}
See \cite{Imbens15:CIS} for a thorough exposition on these assumptions.

\paragraph{Propensity Scores.}
The \emph{propensity score} is one of the most widely used quantities for causal analysis in observational studies \cite{austin2011introduction, Rosenbaum83:PS, rosenbaum1984reducing, rosenbaum1985constructing}. 
In the binary treatment setting, the propensity score $\pi(\x)$ is defined as the probability of a unit with covariate $\x$ receiving treatment $T = 1$,
$\pi(\x) := \mathbb{P}(T = 1| X = \x)$.
It has been shown that, under the above set of assumptions, the propensity score $\pi(\x)$ summarises all the relevant information in $X$ for causal inference so that $T \,\indep\, (Y_0,Y_1)\,| \pi(X)$ holds \cite{Rosenbaum83:PS}. 
Unfortunately, in most observational studies, the true treatment assignment mechanism is not known. 
Thus, a common practice is to fit a propensity score function on data $\mathcal{D}$ using standard statistical models $\pi_\w(\x) = f_{\w}(\x)$, where $f_{\w}$ represents a model parameterised by a parameter vector $\w$. 
In this work, we focus on the logistic regression model since it is the most frequently used model for fitting propensity scores \cite{cepeda2003comparison}. 
The model is defined as 
\begin{align} \label{eq:logistic}
\pi_{\w}(\x) = \frac{1}{1+ e^{-\w^\top \x}}
\end{align}
\noindent where $\w \in\mathbb{R}^d$. 
Other popular techniques for propensity score estimation include classification and regression trees (CART), boosted CART, random forests, \etc.

\paragraph{Inverse Probability Weighting (IPW).}
\label{sec:IPW}
One popular propensity score-based method for estimating $\tau$ from observational data is IPW \cite{ robins2000marginal, rosenbaum1987model}. Using IPW, we can obtain an unbiased ATE estimate $\hat{\tau} = \hat{\mu}_1 - \hat{\mu}_0$ with $\hat{\mu}_1$ and $\hat{\mu}_0$ defined as
\begin{align}
\label{eq:iptw}
\hat{\mu}_1 := \frac{1}{N} \sum_{i=1}^N \frac{y_i t_i}{\pi(\x_i)}, \quad
\hat{\mu}_0 := \frac{1}{N} \sum_{i=1}^N \frac{y_i (1-t_i)}{1-\pi(\x_i)},
\end{align}
\noindent where $N$ is the number of units in $\mathcal{D}$. 
In practice, we replace the propensity score function $\pi$ with its empirical estimate $\pi_{\hat{\w}}$. 



\subsection{Differential Privacy}
The notion of differential privacy (DP) \cite{dwork2014algorithmic} provides a well-defined framework to describe the privacy properties of statistical estimation algorithms. 
DP states that a privacy-preserving, randomised algorithm behaves \textit{similarly} on \textit{similar} datasets. Specifically, the algorithm's behaviour is quantified in terms of a probability ratio, which describes how the algorithm's output changes when different datasets are used as an input. Intuitively, the probability ratio does not change much (behaves similarly) if the input datasets differ by a single entry (similar datasets). The formal definition  is given below.
\begin{definition}\label{def:dp}
A randomised algorithm $\mathcal{A}$ with domain $\mathbb{N}^{|\mathcal{X}|}$, where $\mathcal{X}$ is the data universe, satisfies $(\epsilon, \delta)$-differential privacy, i.e., is $(\epsilon, \delta)$-DP, if for all $\mathcal{S} \subseteq$ Range($\mathcal{A}$) and for all neighbouring $\mathcal{D}, \mathcal{D}' \in \mathbb{N}^{|\mathcal{X}|}$ such that $\|\mathcal{D} - \mathcal{D}'\|_1 \leq 1$, i.e., there is only one entry difference in the two datasets $\Dat, \Dat'$,
\begin{equation*}
\mathbb{P}[\mathcal{A}(\mathcal{D}) \in \mathcal{S}] \leq \exp(\epsilon)\mathbb{P}[\mathcal{A}(\mathcal{D}') \in \mathcal{S}] + \delta
\end{equation*} 
where the probability space is over the outputs of $\mathcal{A}$.
\end{definition} 
Here, $\epsilon$ is defined as the \textit{privacy loss} controlling the level of privacy. 
For $0<\delta<1$, $\delta$ defines the \textit{failure probability}, \ie, an algorithm is $\epsilon$-DP with probability at least $1-\delta$. 

\paragraph{Gaussian Mechanism.}
The Gaussian mechanism \cite{dwork2006our} is commonly used to privatise models (see, \eg, \cite{dwork2014algorithmic,SarwateC13} for other DP mechanisms). 
The mechanism privatises a vector-valued function $f: \Dat \mapsto \mathbb{R}^p$ by adding Gaussian noise to it. The noise is calibrated based on the L2-{\it{sensitivity}} of $f$, which is defined by $S(f) = \max_{\Dat, \Dat', \|\Dat-\Dat'\|=1} \|f(\Dat) - f(\Dat') \|_2$.
Hence, the privatised function has the form $\tilde{f}(\Dat) = f(\Dat) + \Nrm(0, \sigma^2 \mathbf{I}_d)$.
A choice of $\sigma \geq \epsilon^{-1}\sqrt{2\log(1.25/\delta)}S(f)$ produces the function $\tilde{f}(\Dat) $ that is $(\epsilon, \delta)$-DP.


\paragraph{Differentially Private Empirical Risk Minimisation (DP-ERM).}
Let $\ell:\mathbb{R}\times\mathbb{R}\to\mathbb{R}_+$ be the loss function and vector $\w$ the model parameters of $\pi_{\w}$.
Under ERM framework, the optimal model parameters $\hat{\w}$ are obtained by minimising the empirical risk function $J(\w, \mathcal{D}) = m^{-1} \sum_{i=1}^{m} \ell(\pi_\w(\x_i), t_i) + \lambda \Omega(\w)$, where $\Omega(\cdot)$ is the regulariser and $\lambda > 0$ the regularisation constant. 
If logistic regression \eqref{eq:logistic} is used to model the propensity score function $\pi_{\w}$, the parameters $\w$ can be learned using ERM with a $L_2$-regulariser.
We assume throughout that $\mathcal{X}$ is contained in the $L_2$-unit ball\footnote{This is a typical assumption on the dataset in the differential privacy literature.}, \ie, $\|\x_i\|_2 \leq 1$ for all $\x_i \in \mathcal{X}$. Note that the dataset $\mathcal{D}_m = \{(\x_i,t_i,y_i)\}_{i=1}^m$ contains the observed treatment effects $y_i$, but that they are not used to fit $\pi_{\w}$. 
In our case, the regularised cross-entropy loss $J(\w, \mathcal{D})$ is 
\begin{align}\label{eq:loss}
- \frac{1}{m} \sum_{i=1}^m t_i \log p_i + (1 - t_i)\log (1-p_i) +  \frac{\lambda}{2} \|\w\|_2^2
\end{align}
\noindent where $p_i := p(\pi_\w(\x_i))$. This loss is equivalent to the logistic loss used in \cite{chaudhuri2011differentially}. 
The L2-sensitivity of $\hat{\w}$ obtained by minimising \eqref{eq:loss} is 
given by $S(\hat{\w}) = \max_{\Dat, \Dat', \|\Dat-\Dat'\|=1} \left\|\hat{\w}(\Dat) - \hat{\w}(\Dat')
  \right\|_2 \leq 2(m\lambda)^{-1}$ \cite{chaudhuri2011differentially}.

\section{Privacy-Preserving IPW}
\label{sec:det-ate}

We start by explaining why logistic-regression based estimation for the propensity score by minimising \eqref{eq:loss} is not private.
%
%
An intuitive explanation is: since an ERM solution can be written as a linear combination of training samples, the $\hat{\w}$ and $\hat{\w}'$ estimated from datasets $\mathcal{D}$ and $\mathcal{D}'$ differing by one single entry could be completely different, \eg, if the entry was an outlier. 
Hence, the likelihood ratio of the two models $\hat{\w}$ and $\hat{\w}'$ estimated from two neighbouring datasets would be unbounded, which makes $\hat\w$ \textbf{\textit{not}} $\epsilon$-DP. 
A more rigorous proof can be found in~\cite{chaudhuri2011differentially}.

Since existing propensity score estimation often relies on ERM, 
standard methods for modelling the propensity score function would yield a model that violates differential privacy: given $\pi_{\hat{\w}}$ and all points in dataset $\mathcal{D}$ bar one, it is possible to infer the covariates $\x_i$ of omitted unit.
%
This is a serious problem since propensity score-based methods are frequently used to estimate the causal effect from observational studies containing sensitive data and highlights the need for privacy-preserving propensity score estimators and propensity score-based methods.

Privatising the ATE estimated via IPW requires two layers of privatisation.

The first step of our proposed method is to divide the dataset $\mathcal{D}$ into $\Dat_m = \{\x_i, t_i, y_i\}_{i=1}^m$ and $\Dat_n = \{\x_i, t_i, y_i\}_{i=1}^n$, and use the first $m$ points to learn $\hat{\w}$ and the remaining $n$ points to estimate $\hat{\tau}$. 
We separately privatise the estimated propensity score function with respect to the $m$ datapoints and the estimated ATE with respect to the $n$ datapoints to ensure that all $N$ datapoints (where $N=m+n$) in $\mathcal{D}$ are protected.

We describe each step in detail in the next two subsections.

\subsection{Privacy-Preserving Propensity Score}

To privatise the logistic regression model, we first compute a non-private version of the propensity score $\pi_{\hat{\w}}$ by minimising ~\eqref{eq:loss}. Next, we use the Gaussian mechanism to generate a privacy-preserving version of $\hat{\w}$, defined as $\hat{\w}_{\epsilon}$. 

\begin{definition}
\label{def:pps}
Let $\hat{\w}$ be the solution
of \eqref{eq:loss}. 
A privacy-preserving propensity score function is
\begin{align}\label{log_reg}
\pi_{\hat{\w}}^{\epsilon}(\x) = \frac{1}{1+ \exp(-\hat{\w}_{\epsilon}^\top \x)} = \frac{1}{1+ \exp(- \hat{\w}^\top \x - \z^\top \x)},
\end{align}
where $\hat{\w}_{\epsilon} := \hat{\w} + \z$ with $\z \sim \mathcal{N}(\mathbf{0}, \sigma^2\mathbf{I}_d)$ and $\sigma = \epsilon^{-1}\sqrt{2\log(1.25/\delta)}S(\hat{\w})$ for $\epsilon \in (0,1)$ and any $\delta \in (0,1)$. 
\end{definition}

Alongside Definition \ref{def:pps}, we define the counterparts of $\hat{\tau}$, $\hat{\mu}_1$ and $\hat{\mu}_0$ which use $\pi_{\hat{\w}}^\epsilon(\x)$ in place of $\pi_{\hat{\w}}(\x)$ over the $n$ points as $\hat{\tau}_n$, $\hat{\mu}_1^{\epsilon}$ and $\hat{\mu}_0^{\epsilon}$. This yields $$\mbox{\textbf{DP ATE  w.r.t. $\Dat_m$}}: \hat{\tau}_{n} := \hat{\mu}_1^{\epsilon} - \hat{\mu}_0^{\epsilon}$$ where 
\begin{align}
\label{eq:formpropen}
\hat{\mu}_1^{\epsilon} := \frac{1}{n}\sum_{i=1}^{n_{1}} \frac{y_i}{\pi_{\hat{\w}}^{\epsilon}(\x_i)}, \quad
\hat{\mu}_0^{\epsilon} := \frac{1}{n}\sum_{i=1}^{n_{0}} \frac{y_i}{1-\pi_{\hat{\w}}^{\epsilon}(\x_i)},
\end{align}
with the first sum is over the $n_{1}$ points where $t_i = 1$ and the second sum is over the $n_0$ points where $t_i = 0$, and $n_0+n_1 = n$. The estimator $\hat{\tau}_n$ only safeguards the privacy of $m$ points within the dataset, those belonging to the split $\Dat_m$. 
Privatisation of the points in $\Dat_n$ is discussed in the next subsection. Here, we focus on the effect of the noise added for protecting $\Dat_m$ on causal inference.

\begin{figure*}[t!]
    \centering
    \includegraphics[width=0.45\linewidth]{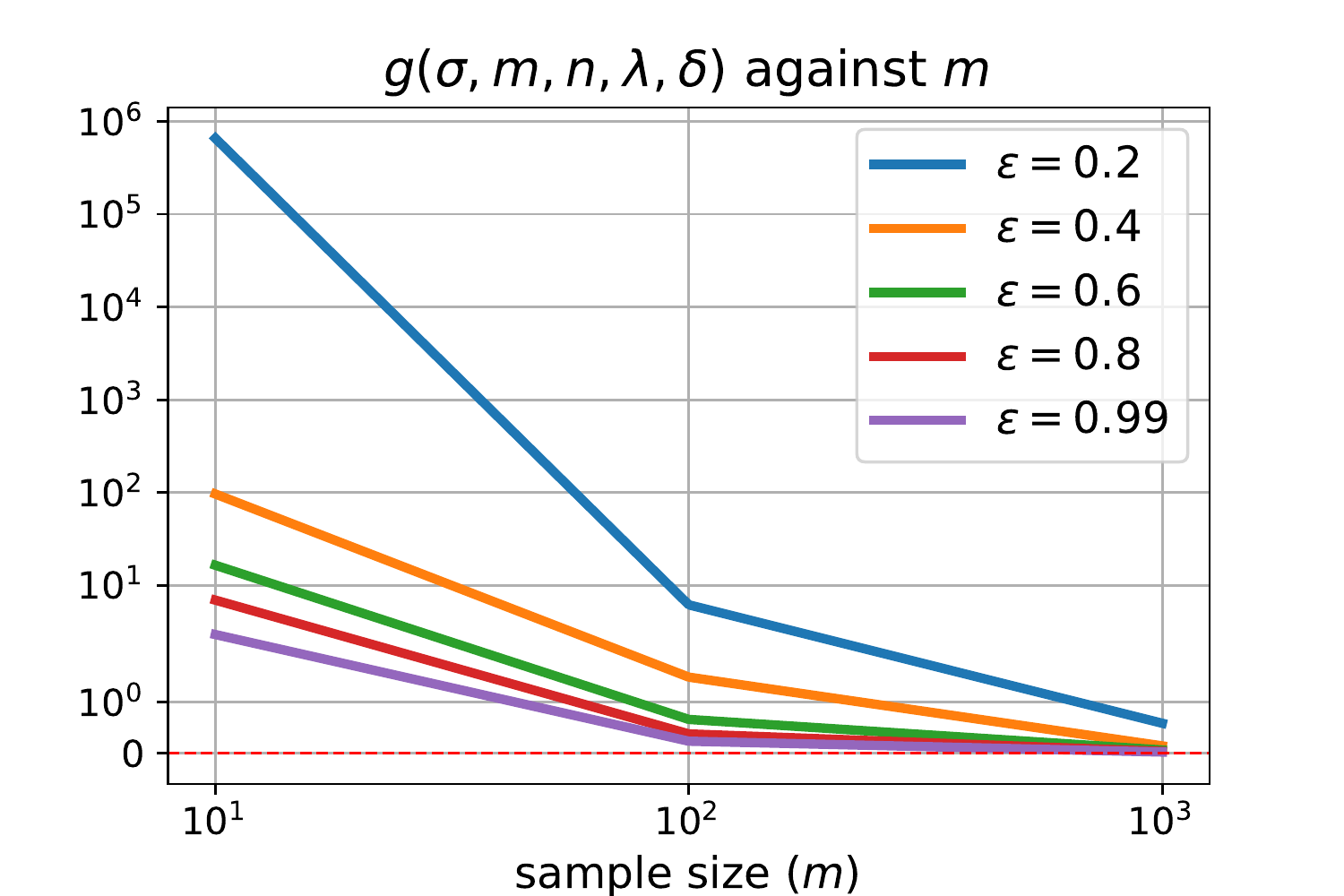}\hfill
    \includegraphics[width=0.45\linewidth]{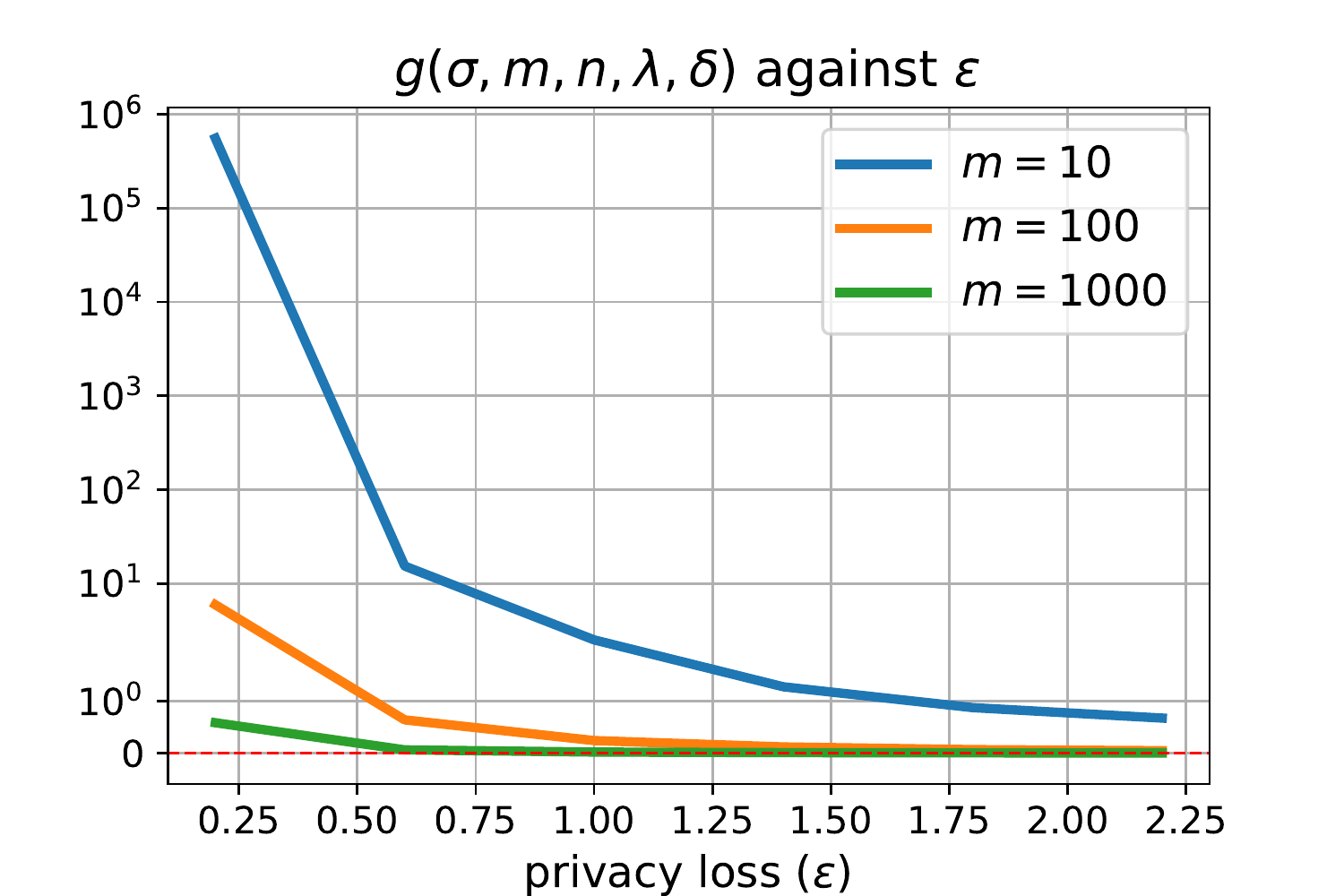}
    \caption{Behaviour of $g(\epsilon,m,n,\lambda,\delta)$ w.r.t. the sample size $m$ and the privacy loss $\epsilon$. As we can see, the bias decreases as we expect when the sample size $m$ and the privacy loss $\epsilon$ increase.}
    \label{fig:g_sigma}
\end{figure*}




\paragraph{Characterising $\hat{\tau}_n$.} Both $\hat{\mu}_1^{\epsilon}$ and $\hat{\mu}_{0}^{\epsilon}$ are weighted sums of correlated log-normal random variables, where the magnitude and signs of the weights depend on the data. This can be seen from their formulae
\begin{align}
\label{eq:mu0-mu1}
\hat{\mu}_1^{\epsilon} &= \frac{1}{n}\sum_{i=1}^{n_{1}} y_i (1 + \exp(-\w^T \x_i)\exp(-\z^T \x_i)),\\ 
\hat{\mu}_0^{\epsilon} &= \frac{1}{n}\sum_{i=1}^{n_{0}} y_i (1 + \exp(\w^T \x_i)\exp(\z^T \x_i)),
\end{align}
and recalling that $\z$ is a Gaussian random variable. 
Obtaining closed-form expressions for the above quantities with finite samples is highly nontrivial \cite{gulisashvili2016tail, lo2012sum}.

As a first step towards understanding the random variable $\hat{\tau}_n$, we show how its expected value can be rewritten in terms of the non-privatised ATE estimate $\hat{\tau}$ and variance of the added noise. See Appendix \ref{proof:exp-tau-hat} for proof.

\begin{lemma}\label{lem:exp-tau-hat}
    Let $\alpha_i := (-1)^{1-t_i}y_i\exp((-1)^{t_i}\hat{\w}^\top \x_i)$ be a constant for $i=1,\ldots,n$. 
    Then, we have $$\mathbb{E}[\hat{\tau}_n] = \hat{\tau} + g(\epsilon,m,n,\lambda,\delta)$$
    where the function $g(\epsilon,m,n,\lambda,\delta)$ is given by
    \begin{equation} \label{eq:g-func}
    \frac{1}{n}\sum_{i=1}^n \alpha_i\left[\exp\left((-1)^{t_i} \frac{4\log(1.25/\delta)\|\x_i\|_2^2}{\epsilon^{2}m^{2}\lambda^2}\right) - 1\right].
    \end{equation}
\end{lemma}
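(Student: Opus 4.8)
The plan is to compute $\mathbb{E}[\hat{\tau}_n] = \mathbb{E}[\hat{\mu}_1^{\epsilon}] - \mathbb{E}[\hat{\mu}_0^{\epsilon}]$ directly from the expansions in \eqref{eq:mu0-mu1}, exploiting that the only source of randomness there is the Gaussian vector $\z \sim \mathcal{N}(\mathbf{0},\sigma^2\mathbf{I}_d)$ while $\hat{\w}$ and the data are fixed. The single fact I need is the Gaussian moment generating function: for fixed $\x_i$, the scalar $\z^\top\x_i$ is distributed as $\mathcal{N}(0,\sigma^2\|\x_i\|_2^2)$, so $\mathbb{E}[\exp(\pm\z^\top\x_i)] = \exp(\sigma^2\|\x_i\|_2^2/2)$, the sign being immaterial because it is squared in the exponent. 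Applying this termwise (justified by linearity of expectation over the finite sums) replaces every factor $\exp((-1)^{t_i}\z^\top\x_i)$ in \eqref{eq:mu0-mu1} by the deterministic factor $\exp(\sigma^2\|\x_i\|_2^2/2)$.

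Next I would recognise the non-privatised estimate $\hat{\tau}$ inside $\mathbb{E}[\hat{\tau}_n]$. Using the logistic form \eqref{eq:logistic}, the IPW weights in \eqref{eq:iptw} satisfy $1/\pi_{\hat{\w}}(\x_i) = 1 + \exp(-\hat{\w}^\top\x_i)$ and $1/(1-\pi_{\hat{\w}}(\x_i)) = 1 + \exp(\hat{\w}^\top\x_i)$, so $\hat{\mu}_1$ and $\hat{\mu}_0$ are exactly the $\z$-free versions of the sums above (over the same $n$ points). Subtracting, the constant contributions and the bare exponentials $\exp((-1)^{t_i}\hat{\w}^\top\x_i)$ cancel between $\mathbb{E}[\hat{\tau}_n]$ and $\hat{\tau}$, leaving
\begin{equation*}
\mathbb{E}[\hat{\tau}_n] - \hat{\tau} = \frac{1}{n}\sum_{i=1}^{n} (-1)^{1-t_i}\, y_i \exp\!\big((-1)^{t_i}\hat{\w}^\top\x_i\big)\Big[\exp\!\big(\sigma^2\|\x_i\|_2^2/2\big) - 1\Big],
\end{equation*}
where the outer sign $(-1)^{1-t_i}$ is $+1$ on the treatment sum (from $\hat{\mu}_1^{\epsilon}$) and $-1$ on the control sum (from $-\hat{\mu}_0^{\epsilon}$). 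The prefactor is precisely $\alpha_i$.

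Finally I would substitute the noise scale from Definition \ref{def:pps}, $\sigma = \epsilon^{-1}\sqrt{2\log(1.25/\delta)}\,S(\hat{\w})$, together with the sensitivity bound $S(\hat{\w}) \le 2(m\lambda)^{-1}$. This turns the exponent into $\sigma^2\|\x_i\|_2^2/2 = 4\log(1.25/\delta)\|\x_i\|_2^2/(\epsilon^2 m^2\lambda^2)$ and yields the stated $g(\epsilon,m,n,\lambda,\delta)$.

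The computation has no deep obstacle; the delicate part is the bookkeeping, and I would be most careful to: (i) keep the treatment and control index subsets straight so that the correct $y_i$ enter each sum; (ii) align the two sign sources, namely the overall $\hat{\mu}_1-\hat{\mu}_0$ sign and the sign inside the $\hat{\w}$ exponential, so that they fuse into the single definition of $\alpha_i$; and (iii) note that the moment-generating-function factor carries a \emph{positive} exponent $\sigma^2\|\x_i\|_2^2/2$ irrespective of the sign $(-1)^{t_i}$ attached to the noise, so the sign sitting in the exponent of the final expression must be tracked carefully against the claimed form.
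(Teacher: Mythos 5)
Your proposal is correct and follows essentially the same route as the paper's proof: take the expectation of \eqref{eq:mu0-mu1} over $\z$ using the Gaussian moment generating function, recognise $\hat{\mu}_1$ and $\hat{\mu}_0$ in the resulting sums, and substitute $\sigma = 2(\epsilon m\lambda)^{-1}\sqrt{2\log(1.25/\delta)}$ at the end. Your point (iii) is well taken: the MGF factor is $\exp(\sigma^2\|\x_i\|_2^2/2)$ with a \emph{positive} exponent for both treatment groups, which is exactly the $\beta_i$ appearing in the paper's own proof, so the $(-1)^{t_i}$ sitting inside the exponential of \eqref{eq:g-func} as stated is inconsistent with the derivation and appears to be a typo in the lemma statement rather than an error in your argument.
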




Lemma \ref{lem:exp-tau-hat} allows us to interpret $\hat{\tau}_{n}$ as a biased estimate of $\hat{\tau}$, where the \emph{additive} bias term is a function of the privacy loss $\epsilon$, sample sizes $m$ and $n$, regularisation constant $\lambda$, and failure probability $\delta$. 
Notice that the bias $g(\epsilon,m,n,\lambda,\delta)$ converges to zero as either the privacy budget $\epsilon$ or the total number of points $m+n$ goes to infinity as to be expected.
We complement our insights with numerical simulations (see Figure \ref{fig:g_sigma}) that describe the behaviour of this nontrivial bias term.

To study the probability of $\hat{\tau}_n$ having the opposite sign w.r.t. the non-private estimator, we assume that its support is bounded both from above and below. This allows us to employ standard concentration inequality results for variables with bounded supports. 
We bound the support of the estimator by either \emph{deterministic}, or \emph{probabilistic} means. An in-depth discussion of how we do so is provided in Appendix \ref{sec:det-prob-bounds}.

The next theorem characterises the behaviour of $\hat{\tau}_n$.

\begin{theorem}\label{thm:main-thm-1}
Assume that $\hat{\tau} > 0$ and $\mathrm{sign}(\hat{\tau})=\mathrm{sign}(\tau)$. If $|\hat{\tau}_n| \leq \eta$ for some $\eta > 0$, we have 
$$\mathbb{P}(\hat{\tau}_{n} \leq 0| \hat\tau>0) \leq \exp \left( -2\eta^{-2}(\hat{\tau} + g(\epsilon,m,n,\lambda,\delta))^2 \right).$$
\end{theorem}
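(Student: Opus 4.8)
The plan is to recognise the stated inequality as a one-sided concentration (Hoeffding-type) bound for the bounded random variable $\hat{\tau}_n$, where all the randomness is over the Gaussian noise $\z$ while the estimation split $\Dat_n$ — and hence $\hat{\tau}$ and every $\alpha_i$ — is held fixed. This viewpoint is exactly what Lemma \ref{lem:exp-tau-hat} sets up: it treats the $\alpha_i$ as constants and already supplies the mean, $\mathbb{E}[\hat{\tau}_n] = \hat{\tau} + g(\epsilon,m,n,\lambda,\delta)$. Conditioning on $\hat{\tau}>0$ simply restricts attention to data configurations with a positive non-private estimate; in the regime of interest the additive bias $g$ does not overturn this sign, so $\mathbb{E}[\hat{\tau}_n]>0$ and the event $\{\hat{\tau}_n \le 0\}$ is a genuine lower-tail event sitting a distance $\hat{\tau}+g$ below the mean.

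First I would rewrite the target probability as a centred deviation,
\[
\mathbb{P}(\hat{\tau}_n \le 0 \mid \hat{\tau}>0) = \mathbb{P}\big(\hat{\tau}_n - \mathbb{E}[\hat{\tau}_n] \le -(\hat{\tau}+g)\big).
\]
Then I would invoke the boundedness hypothesis $|\hat{\tau}_n| \le \eta$, which confines $\hat{\tau}_n$ to a compact interval; by Hoeffding's lemma the centred variable $\hat{\tau}_n - \mathbb{E}[\hat{\tau}_n]$ is sub-Gaussian with proxy variance controlled by the width $R$ of that support. A Chernoff argument — optimising $\mathbb{E}\big[\exp(-s(\hat{\tau}_n-\mathbb{E}[\hat{\tau}_n]))\big] \le \exp(s^2 R^2/8)$ over $s>0$ against the deviation $\hat{\tau}+g$ — yields $\exp(-2R^{-2}(\hat{\tau}+g)^2)$. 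Substituting $\mathbb{E}[\hat{\tau}_n]$ from Lemma \ref{lem:exp-tau-hat} and taking $R=\eta$ produces exactly the claimed factor $\exp(-2\eta^{-2}(\hat{\tau}+g)^2)$. I would close by noting the degenerate regime $\hat{\tau}+g\le 0$, where the right-hand side is at least $1$ and the bound holds vacuously.

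The main obstacle is that $\hat{\tau}_n$ is \emph{not} a sum of independent bounded summands but a single nonlinear, weighted log-normal functional of the shared noise vector $\z$ (as seen in \eqref{eq:mu0-mu1}), so a term-by-term independence-based Hoeffding bound is unavailable. The observation that rescues the argument is that independence is never required: Hoeffding's lemma applies to any single random variable of bounded range, and the hypothesis $|\hat{\tau}_n|\le\eta$ furnishes precisely that range. This is exactly why bounded support is imposed as a standing assumption — the genuinely delicate work is \emph{establishing} that bound (deterministically or with high probability, as deferred to Appendix \ref{sec:det-prob-bounds}), whereas the concentration step itself collapses to the single-variable Chernoff computation above.
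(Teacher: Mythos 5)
Your proof follows exactly the route the paper's (one-line) proof indicates: hold the data fixed, use Lemma~\ref{lem:exp-tau-hat} to place the mean of $\hat{\tau}_n$ at $\hat{\tau}+g$, and apply Hoeffding's inequality for a bounded variable to the lower-tail event $\{\hat{\tau}_n\le 0\}$. Your added remark that no independent-summand structure is needed --- only the single-variable Hoeffding lemma plus a Chernoff argument --- correctly supplies the detail the paper leaves implicit, and you inherit the paper's own convention of treating $\eta$ as the support width (strictly, $|\hat{\tau}_n|\le\eta$ gives width $2\eta$, which would cost a factor of $4$ in the exponent in both your write-up and the paper's stated bound).
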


\begin{proof}
Given that $|\hat{\tau}_{n}| \leq \eta$ (or the result in Lemma \ref{lem:prob-bdc} with probability at least $1-\gamma$), we can apply Lemma \ref{lem:exp-tau-hat} and Hoeffding's inequality for bounded variables to yield the result.
\end{proof}
Qualitatively, we expect this since the theorem implies that the larger the magnitude of the true estimated ATE is, the smaller the probability of drawing incorrect causal conclusions from the partially privatised estimator. Interestingly, the bound in Theorem \ref{thm:main-thm-1} provides a quantitative characterisation, showing that such probability decreases \emph{exponentially} as a function of $\hat{\tau}$. 
The probability of drawing incorrect conclusions from $\hat{\tau}_{n}$ also depends exponentially on the bias $g(\epsilon,m,n,\lambda,\delta)$. We provide empirical results clarifying the dependency on this term in Section \ref{sec:experiments}.

\begin{algorithm*}[t]
    \caption{PP-IPW}
    \begin{algorithmic}[1]
    \REQUIRE Data $\mathcal{D} = \{(\mathbf{x}_i,t_i,y_i)\}_{i=1}^N$ and privacy loss ($\epsilon, \delta$)
    \STATE Split $\mathcal{D}$ into two random subsets $\mathcal{D}_m$ and $\mathcal{D}_n$ consisting of $m$ and $n$ data points.
    \\
    \textbf{A. Obtain DP propensity score function} 
    \STATE Minimise $J(\w,\mathcal{D}_m)$ in \eqref{eq:loss} for the non-private estimate $\hat{\mathbf{w}}$. 
    \STATE $\hat\w_{\epsilon} =  \hat\w + \vz $ where $\z \sim \mathcal{N}(\mathbf{0}, \sigma_m^2\mathbf{I}_d)$ and $\sigma_m = \epsilon^{-1}\sqrt{2\log(1.25/\delta)}S(\hat{\w})$.
    \STATE Output DP propensity score function
    $\pi_{\hat{\w}}^{\epsilon}(\x) = 1/(1+ \exp(- \hat{\w}_{\epsilon}^\top \x))$. \\
    \textbf{B. Obtain DP ATE} 
    \STATE Compute $\hat{\tau}_{n} := \hat{\mu}_1^{\epsilon} - \hat{\mu}_0^{\epsilon}$, given $\mathcal{D}_n$, where 
    $\hat{\mu}_1^{\epsilon} := n^{-1}\sum_{i=1}^{n_{1}} \frac{y_i}{\pi_{\hat{\w}}^{\epsilon}(\x_i)}$ and
    $\hat{\mu}_0^{\epsilon} := n^{-1}\sum_{i=1}^{n_{0}} \frac{y_i}{1-\pi_{\hat{\w}}^{\epsilon}(\x_i)}$.
    \STATE Output DP ATE $\hat{\tau}_{n}^{\epsilon}= \hat{\tau}_{n} + e$, where $e \sim \mathcal{N}(0,\sigma_n^2)$ and $\sigma_n := \epsilon^{-1}\sqrt{2 \log (1.25 / \delta)} S(\hat{\tau}_{n})$.
    \ENSURE DP propensity score function $\pi_{\hat{\w}}^{\epsilon}$ (w.r.t. $\mathcal{D}_m$) and DP ATE  $\hat{\tau}_n^{\epsilon}$ (w.r.t $\mathcal{D}$).
    \end{algorithmic}\label{algo:PPIPW}
\end{algorithm*}

\subsection{Privacy-Preserving ATE} 

We now proceed to privatising the $n$ points used to compute $\hat{\tau}_n$. 
Given that $\mathbb{P}(T = 1 | x_i)$ is bounded above and below by $\omega_1$ and $\omega_2$ respectively for all $i \in n$, $\hat{\tau}_n$ is consequently bounded as well.
By further assuming that $|y_i| \leq C_y$ for all $i \in n$, we ensure that the L2-sensitivity of $\hat{\tau}_{n}$, $S(\hat{\tau}_{n})$, is bounded by $2 n^{-1} C_y \max[1/\omega_1, 1/(1 - \omega_2)]$. We show how this quantity is obtained in Appendix \ref{sec:l2-sens}.

We apply the Gaussian mechanism to $\hat{\tau}_{n}$ to obtain a privacy-preserving approximation $\hat{\tau}_{n}^{\epsilon}$ that is $(\epsilon, \delta)$-DP w.r.t to the $n$ points used to obtain the estimate. 
$$\mbox{\textbf{DP ATE w.r.t both $\Dat_m$ and $\Dat_n$} : }\hat{\tau}_{n}^{\epsilon} = \hat{\tau}_{n} + e $$ where the noise is drawn from $e \sim \mathcal{N}(0,\sigma_n^2)$ and with the noise standard deviation $\sigma_n := \epsilon^{-1}\sqrt{2 \log (1.25 / \delta)} S(\hat{\tau}_{n})$.

\paragraph{Characterising $\hat{\tau}^\epsilon_n$.}
We now present our main result which bounds the probability that \emph{both} $\hat{\tau}_{n}^{\epsilon}$ and $\hat{\tau}_n$ yield incorrect causal conclusions. See Appendix \ref{proof:main-thm-2} for proof.

\begin{theorem} \label{thm:main-thm-2}
Assume that $\hat{\tau} > 0$ and $\mathrm{sign}(\hat{\tau})=\mathrm{sign}(\tau)$. If $|\hat{\tau}_{n}| \leq \eta$ for some $\eta > 0$, we have 
\begin{align*}
\MoveEqLeft \mathbb{P}(\hat{\tau}_{n}^{\epsilon} \leq 0, \hat{\tau}_{n} \leq 0 | \hat\tau >0) \\
&\leq \frac{1}{2} \exp \left( \frac{-2(\hat{\tau} + g)^2}{\eta^2} \right)\left[1 + \mathrm{erf}\left(\frac{|\hat{\tau}_{n}|}{\sigma_n\sqrt{2}}\right)\right],
\end{align*} 
\noindent where $g := g(\epsilon,m,n,\lambda,\delta)$.
\end{theorem}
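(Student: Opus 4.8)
The plan is to exploit the additive-noise structure $\hat{\tau}_n^\epsilon = \hat{\tau}_n + e$ with $e \sim \mathcal{N}(0,\sigma_n^2)$ drawn independently of $\mathcal{D}_n$ and of the noise $\z$ that enters $\hat{\tau}_n$ through $\pi_{\hat{\w}}^\epsilon$. Since $e$ is independent of $\hat{\tau}_n$, I would first split the joint error event along the partially privatised estimator via the multiplication rule,
\begin{equation*}
\mathbb{P}(\hat{\tau}_n^\epsilon \leq 0, \hat{\tau}_n \leq 0 \mid \hat\tau > 0) = \mathbb{P}(\hat{\tau}_n^\epsilon \leq 0 \mid \hat{\tau}_n \leq 0)\,\mathbb{P}(\hat{\tau}_n \leq 0 \mid \hat\tau > 0),
\end{equation*}
so that the two privatisation layers can be bounded separately.

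For the second factor, which concerns only the propensity-score noise, I would invoke Theorem \ref{thm:main-thm-1} verbatim: under $|\hat{\tau}_n| \leq \eta$ it gives $\mathbb{P}(\hat{\tau}_n \leq 0 \mid \hat\tau > 0) \leq \exp(-2\eta^{-2}(\hat{\tau} + g)^2)$, which is exactly the exponential prefactor in the claim. For the first, conditional factor I would condition on the realised value of $\hat{\tau}_n$ and use that, on the event $\{\hat{\tau}_n \leq 0\}$, the event $\{\hat{\tau}_n^\epsilon \leq 0\}$ is precisely $\{e \leq -\hat{\tau}_n\} = \{e \leq |\hat{\tau}_n|\}$. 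Because $e$ is centred Gaussian with standard deviation $\sigma_n$, this probability equals $\Phi(|\hat{\tau}_n|/\sigma_n)$, where $\Phi$ is the standard normal CDF; rewriting it through $\Phi(x) = \tfrac12\bigl[1 + \mathrm{erf}(x/\sqrt2)\bigr]$ produces exactly the bracketed term $\tfrac12[1 + \mathrm{erf}(|\hat{\tau}_n|/(\sigma_n\sqrt2))]$. Multiplying the two factors then yields the stated bound.

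The main obstacle is making the factorisation rigorous, since conditioning on $\{\hat{\tau}_n \leq 0\}$ does not pin down $\hat{\tau}_n$, yet the erf term depends on its random value. The clean way to handle this is to condition on the entire realised $\hat{\tau}_n$ (legitimate because $e \perp \hat{\tau}_n$) and write the joint probability as $\mathbb{E}\bigl[\mathbbm{1}(\hat{\tau}_n \leq 0)\,\Phi(|\hat{\tau}_n|/\sigma_n) \mid \hat\tau > 0\bigr]$; monotonicity of $\Phi$ (equivalently of $\mathrm{erf}$) together with the assumption $|\hat{\tau}_n| \leq \eta$ then controls the Gaussian factor, while the indicator is absorbed by Theorem \ref{thm:main-thm-1}. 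I would also verify the sign bookkeeping carefully — that $\hat{\tau}_n \leq 0$ is what converts $-\hat{\tau}_n$ into $|\hat{\tau}_n|$, so the argument of erf is nonnegative — since this is precisely where the absolute value in the statement originates.
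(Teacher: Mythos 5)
Your proposal is correct and follows essentially the same route as the paper's proof: the same factorisation $\mathbb{P}(\hat{\tau}_{n}^{\epsilon} \leq 0, \hat{\tau}_{n} \leq 0 \mid \hat\tau>0) = \mathbb{P}(\hat{\tau}_{n}^{\epsilon} \leq 0 \mid \hat{\tau}_{n} \leq 0, \hat\tau>0)\,\mathbb{P}(\hat{\tau}_{n} \leq 0 \mid \hat\tau>0)$, Theorem \ref{thm:main-thm-1} for the second factor, and the Gaussian CDF written via $\mathrm{erf}$ for the first. Your closing remark about conditioning on the realised value of $\hat{\tau}_n$ rather than merely on the event $\{\hat{\tau}_n \leq 0\}$ is in fact a more careful treatment of a step the paper glosses over, but it does not change the argument.
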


The additional $1/2 [1 + \mathrm{erf}(|\hat{\tau}_{n}|/\sigma_n\sqrt{2})]$ term in Theorem \ref{thm:main-thm-2} represents the probability that the added noise from the Gaussian mechanism $e$ is greater than $|\hat{\tau}_{n}|$, computed via the Gaussian CDF. The bound in Theorem \ref{thm:main-thm-2} further accounts for $\hat{\tau}_{n}$ and $\sigma_n$: the probability that both $\hat{\tau}_{n}^{\epsilon}$ and $\hat{\tau}_{n}$ are negative increases exponentially with $|\hat{\tau}_{n}|$ and decreases exponentially with $\sigma_n$ respectively where the latter is a function of $\epsilon$, $\delta$ and $S(\hat{\tau}_{n})$. 

This theorem provides a full characterisation of our proposed differentially private ATE estimation procedure. It summarises the effects of protecting all $m+n=N$ points, and show that the probability of drawing an incorrect causal conclusion decay exponentially w.r.t to the values of $\hat{\tau}$ and $\hat{\tau}_n$. 

\paragraph{Remark.}
Notice that in Theorem \ref{thm:main-thm-2} we chose to bound the probability $\mathbb{P}(\hat{\tau}_{n}^{\epsilon} \leq 0, \hat{\tau}_{n} \leq 0 | \hat\tau>0)$. However, depending on the application, other quantities, such as $\mathbb{P}(\hat{\tau}_{n}^{\epsilon} \leq 0 | \hat\tau>0)$ or $\mathbb{P}(\hat{\tau}_{n}^{\epsilon} \leq 0 | \hat{\tau}_{n} \leq 0, \hat\tau>0)$, might be more interesting.
It is also possible to bound those quantities; the proofs follow from that of Theorem \ref{thm:main-thm-2} presented in Appendix \ref{proof:main-thm-2}, and we omit them for brevity.

Putting everything together, our algorithm is presented in \algoref{PPIPW}. For the sake of simplicity, we assigned the same privacy budget for privatising the propensity score and the average treatment effect but one could choose to have separate privacy levels for the two quantities. We also include in Appendix \ref{sec:att_and_atc} bounds for the average treatment effect for the treated (ATT) and the average treatment effect for the controls (ATC).
We leave the development of privatised estimators based on more sophisticated techniques as future work.

\section{EXPERIMENTS} \label{sec:experiments}
In this section, we demonstrate our theoretical findings with experiments on synthetic, semi-synthetic and real data.
We set the probability of failure $\delta = 10^{-6}$ and regularisation coefficient $\lambda = 0.1$ for all experiments.
The logistic regression model is implemented in PyTorch \cite{paszke2017automatic} and optimised via gradient descent using the entire dataset.
To ensure reproducibility, we set the random seed for both NumPy and PyTorch to 1.

\begin{figure*}[t!]
\centering
    \begin{subfigure}{0.45\linewidth}
\includegraphics[width=\linewidth]{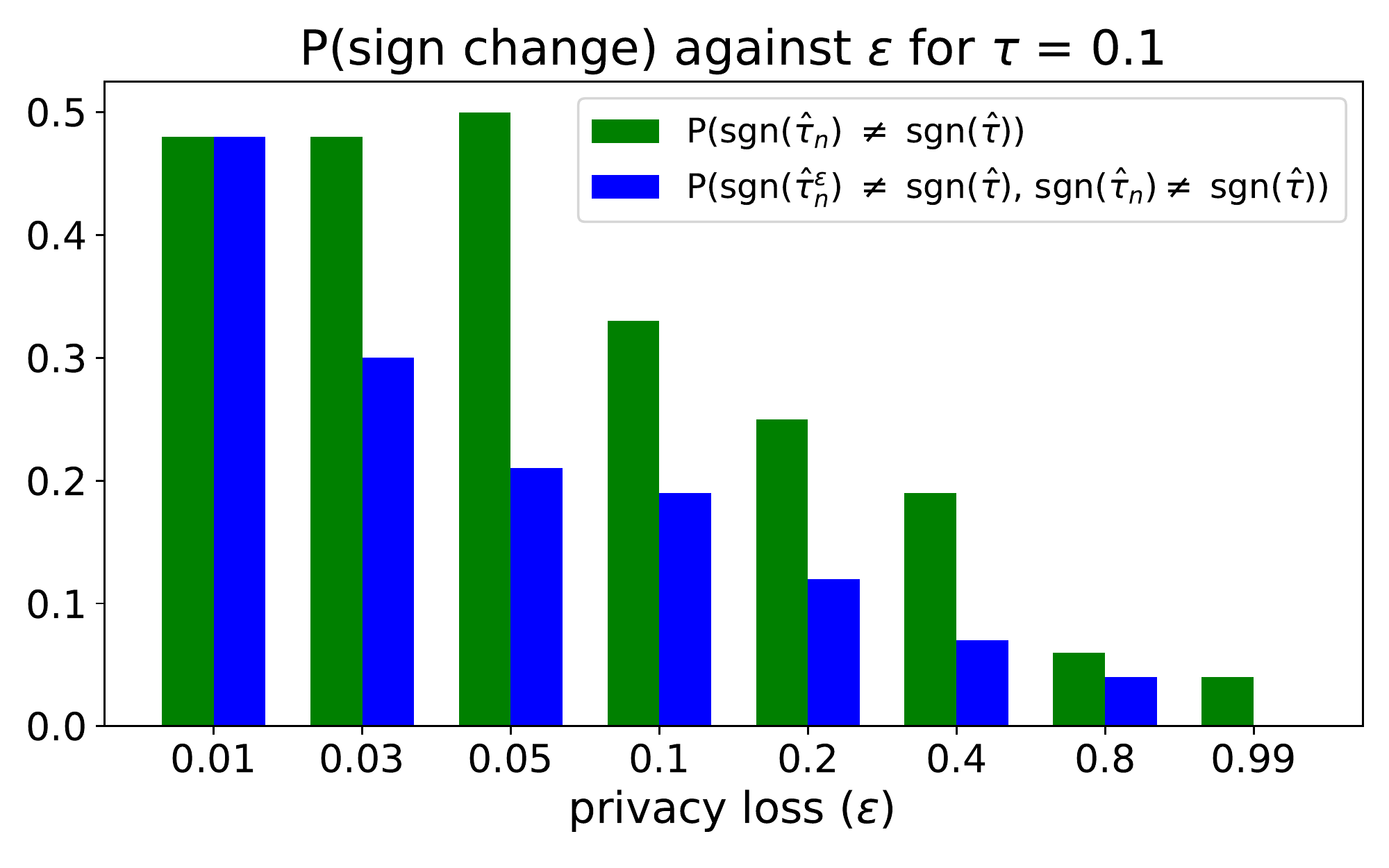} 
    \vspace{-0.75cm}
    \caption{}
\label{fig:1a}
    \end{subfigure}\hfill
    \begin{subfigure}{0.45\linewidth}
\includegraphics[width=\linewidth]{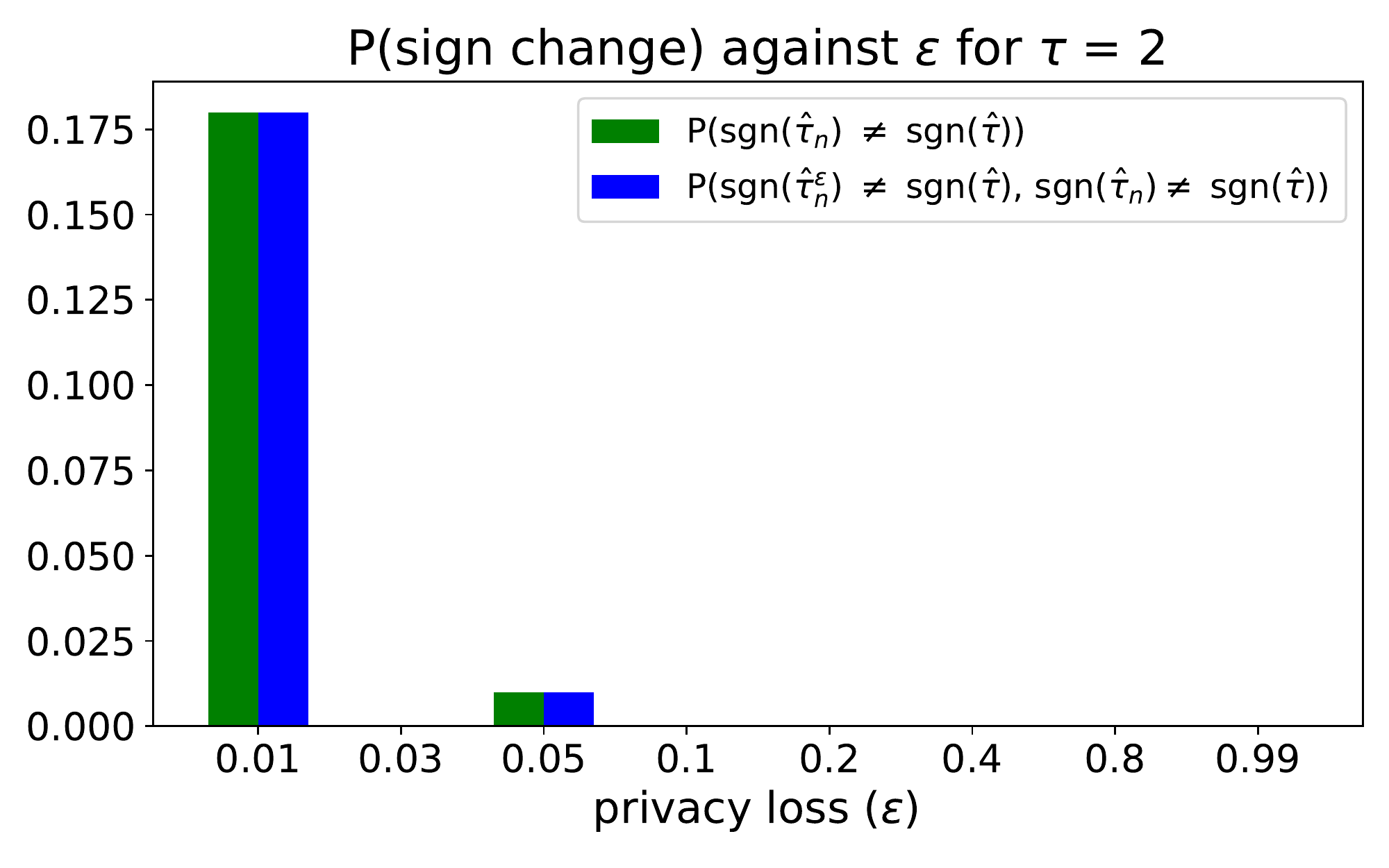}
    \vspace{-0.75cm}
    \caption{}
\label{fig:1b}
    \end{subfigure}
    \begin{subfigure}{0.45\linewidth}
    \includegraphics[width=\linewidth]{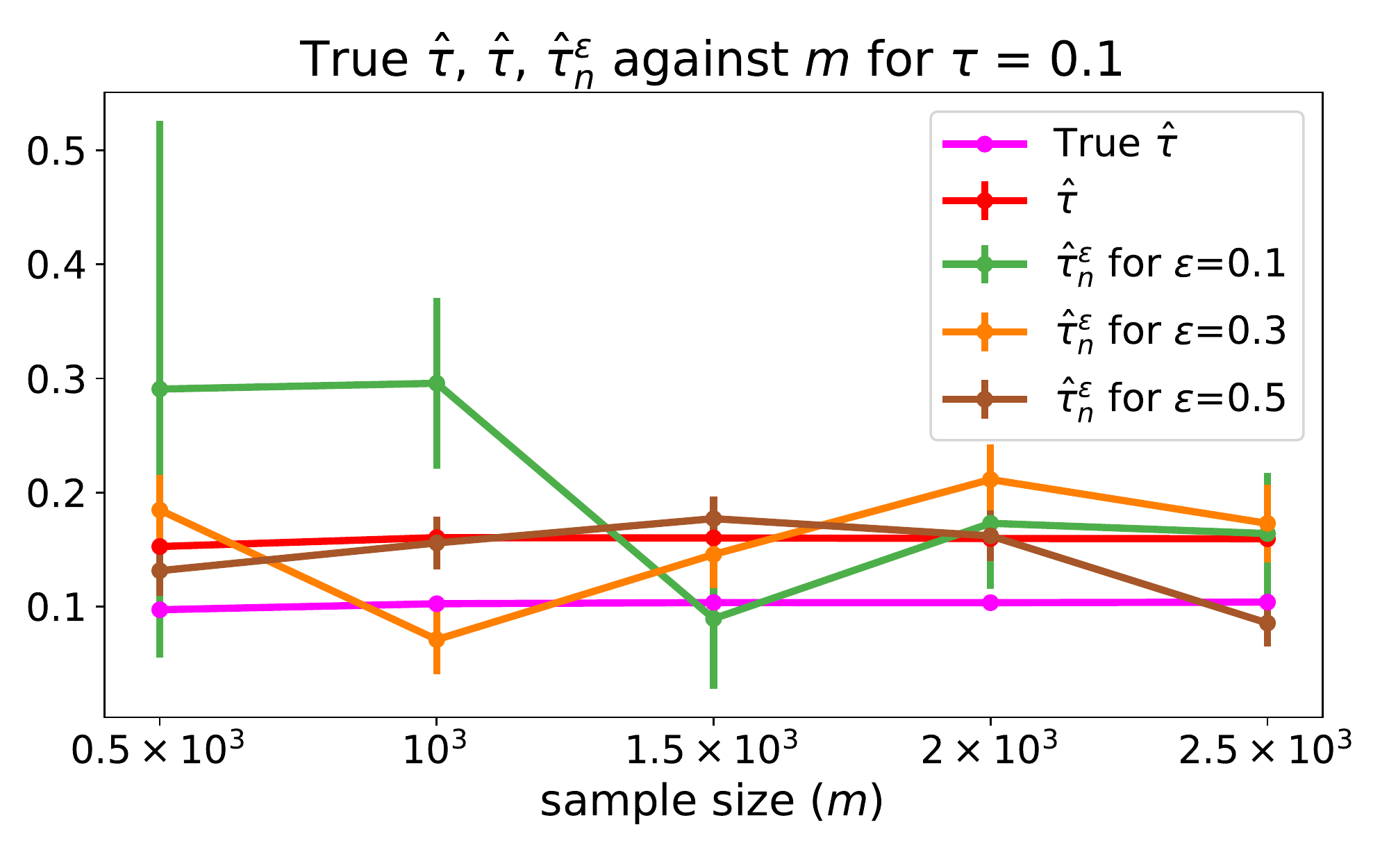}
    \vspace{-0.75cm}
    \caption{}
    \label{fig:1c}
    \end{subfigure}\hfill
    \begin{subfigure}{0.45\linewidth}
\includegraphics[width=\linewidth]{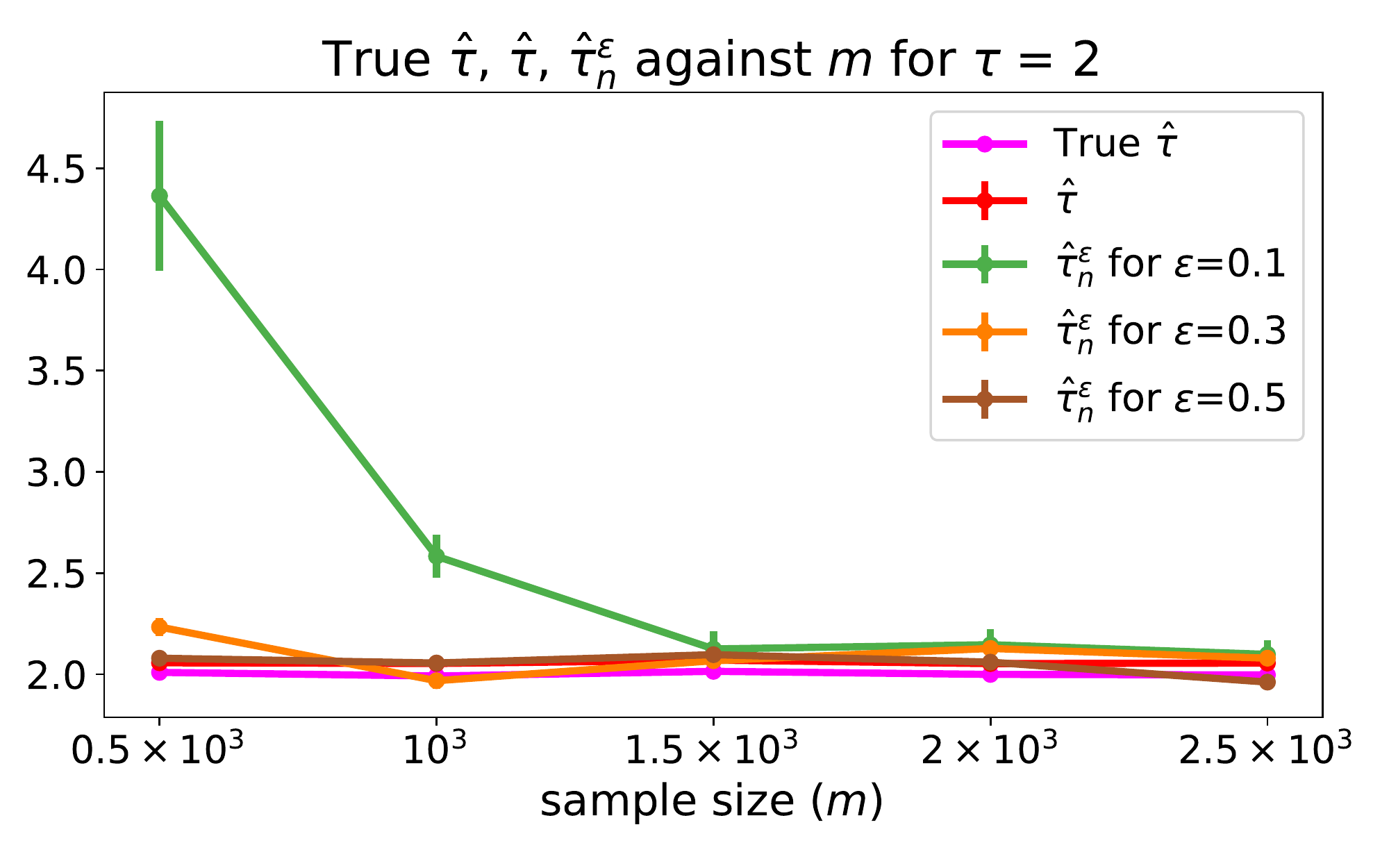}
    \vspace{-0.75cm}
    \caption{}
\label{fig:1d}
    \end{subfigure}
    \vspace{-0.25cm}
    \caption{Experimental results for synthetic data. (a) and (c) correspond to low-confidence with $\tau=0.1$. (b) and (d) correspond to the high-confidence with $\tau=2$. See main text for interpretation.}
    \label{fig:1}
\end{figure*}

\paragraph{Synthetic Data.}
For this set of experiments, we vary the $m$ points used to fit the logistic regression model, use $n = 1000$ points to estimate the ATE and average over 100 trials. 


We generate $X \in \mathbb{R}^{50}$ by sampling $\x_i$s separately from $\mathcal{N}(\mathbf{0}, 9 \cdot\mathbf{I}_{50})$ for each trial and standardising each separate sampled set of $\x_i$s with the maximum L2-norm of the $\x_i$s in the set. 
For each standardised $\x_i$, the treatment assignment $t_i$ and outcome $y_i$ are generated across trials in the following manner: 
\begin{align*}
    t_i &\sim \text{Bernoulli}(s(\mathbf{a}^\top\x_i)), & \mathbf{a} & \sim \mathcal{N}(\mathbf{0},\mathbf{I}_{50}),\\
    y_i &= \mathbf{b}^\top\x_i + t_i \tau + \vartheta, & \mathbf{b} & \sim \mathcal{N}(\mathbf{0},\mathbf{I}_{50})
\end{align*}
\noindent where $\vartheta\sim \mathcal{N}(0,0.01)$, $\tau\in\{0.1,2\}$ is a non-zero bias, and $s(\cdot)$ is a sigmoid function.
We perturb the weights of the learned logistic regression model with a sample from $\mathcal{N}(\mathbf{0}, \sigma^2\mathbf{I}_{50})$ with $\sigma$ defined in \eqref{log_reg}.

Figures \ref{fig:1a} and \ref{fig:1b} shows that the probability that the signs of $\hat{\tau}$ disagreeing with both $\hat{\tau}_n$ and $\hat{\tau}_n^\epsilon$ decreases as $\epsilon$ and $\tau$ increases with $m$ set to 1000. 
These results reflect the exponential dependence of $\mathbb{P}(\hat{\tau}_n \leq 0)$ and $\mathbb{P}(\hat{\tau}_n^\epsilon \leq 0, \hat{\tau}_n \leq 0)$ on $g(\epsilon,m,n,\lambda,\delta) = O(1/\epsilon^2)$ and $\hat{\tau}$ in Theorems \ref{thm:main-thm-1} and \ref{thm:main-thm-2}.

The convergence of $\hat{\tau}_n^\epsilon$ to $\hat{\tau}$ as $m$ increases from 500 to 2500 at intervals of 500 seen in Figures \ref{fig:1c} and \ref{fig:1d} reinforces the inverse relationship between $g(\epsilon,m,n,\lambda,\delta)$ and $m$ earlier demonstrated in Figure \ref{fig:g_sigma}. 
The error bars represent the 95\% confidence interval of the mean of the various estimates. 
The above plots also show that increasing $\tau$ yields exponentially larger mean ATE estimates which checks out with the form of $\mu_1^\epsilon$ and $\mu_0^\epsilon$ in \eqref{eq:mu0-mu1}, as the expectation of the log-normal random variables weighting $y_i$ monotonically increases with the variance of the Gaussian random variable.

\paragraph{Semi-synthetic Data.}
Next, we test our methods on the semi-synthetic binary-treatment Infant Health and Development Programme (IHDP) dataset that was introduced in \cite{hill2011bayesian}. 
We use the train and test sets from \cite{shalit2017estimating} for fitting logistic regression and estimating the ATE respectively with the true ATE of the train/test splits being 4.
IHDP is a real-world dataset with 25 covariates describing 747 children and their mothers, de-randomised binary treatments and synthetic continuous outcomes that can be used to compute a ground truth ATE \cite{hill2011bayesian}.
We create balanced training and ATE estimation datasets where $m = 500$ and $n = 500$ by sampling with replacement 250 units with $T =1$ and $T=0$ and 100 units with $T =1$ and $T=0$ respectively from the above train and test sets.
As the IHDP dataset comes with 1000 different realisations of train and test data, we average over all realisations.
In Tables \ref{tab:ihdp}, we see that increasing $\epsilon$ generally increases the fidelity of the mean ATE estimate and reduces $\mathbb{P}(\sgn(\hat{\tau_n}) \neq \sgn(\hat{\tau}))$ and $\mathbb{P}(\sgn(\hat{\tau_n^\epsilon}) \neq \sgn(\hat{\tau}))$ respectively. 
We did not include the average standard deviation of the estimates as estimating ATE with IPW is known to have high variance due to the unboundedness of the propensity score function $\pi_{\hat{\w}}$.
Lastly, another critical observation from Tables \ref{tab:ihdp}, especially for practitioners, is that too small an $\epsilon$ can lead to an unreliable estimate of $\hat{\tau}_{n}$ and $\hat{\tau}_n^{\epsilon}$, \ie, see the estimates for IHDP when $\epsilon=0.2$. 

\paragraph{Real Data.}
To further verify our proposed method, we use the Lalonde observational studies benchmark \cite{lalonde1986evaluating} obtained from \cite{cbps}. 
As we do not account for unbalanced datasets, we only use the original Lalonde dataset with 297 treated and 425 control individuals and subsample from it to create our training and ATE estimation datasets. 
There are 9 covariates containing sensitive information such as age, education, and race, and the outcome is 1978 earnings.
As no train/test splits are provided, we sample without replacement 100 units of $T=1$ and $T=0$ to create the ATE estimation dataset of size 200 and sample with replacement 250 units with $T=1$ and $T=0$ from the remaining points to generate the training dataset of size 500.
We do the above 1000 times to obtain the same number of realisations as the IHDP dataset.
The results for Lalonde in Tables \ref{tab:lalonde} supplements those for IHDP: increasing $\epsilon$ improves the accuracy of the mean ATE estimate and decreases $\mathbb{P}(\sgn(\hat{\tau_n}) \neq \sgn(\hat{\tau}))$ and $\mathbb{P}(\sgn(\hat{\tau_n^\epsilon}) \neq \sgn(\hat{\tau}))$.

\begin{table}[t!]
    \centering
    \caption{Average $\hat{\tau}_{n}$, $\hat{\tau}_n^{\epsilon}$, $\rho(\hat{\tau},\hat{\tau}_n) := \mathbb{P}(\sgn(\hat{\tau}_n) \neq \sgn(\hat{\tau}))$, and $\rho(\hat{\tau},\hat{\tau}_n^{\epsilon}) := \mathbb{P}(\sgn(\hat{\tau}_n^{\epsilon}) \neq \sgn(\hat{\tau}))$ for various $\epsilon$ over 1000 runs on IHDP dataset. 
    The average $\hat{\tau}$ is 4.80.}
    \label{tab:ihdp}
    \resizebox{\columnwidth}{!}{
    \begin{tabular}{cccccc}
        \toprule
        \multirow{2}{*}{\textbf{Estimate}} &
        \multicolumn{5}{c}{\textbf{Privacy loss} ($\epsilon$)} \\
        & $0.2$ & $0.4$
            & $0.6$ & $0.8$
            & $0.99$ \\ 
        \midrule\midrule
        $\hat{\tau}_n$ & -237487.30 & 32.34 & 8.37 & 5.22 & 5.31  \\
        $\hat{\tau}_n^{\epsilon}$  & -237477.13 & 32.20 & 8.50 & 5.09 & 5.40  \\
        \midrule
        $\rho(\hat{\tau},\hat{\tau}_n)$ & 0.515  & 0.425 & 0.344 & 0.287 & 0.229 \\   
        $\rho(\hat{\tau},\hat{\tau}_n^{\epsilon})$ & 0.517  & 0.425 & 0.347 & 0.301 & 0.228 \\
        \bottomrule
    \end{tabular}}
\end{table}

\begin{table}[t!]
    \centering
    \caption{Average $\hat{\tau}_{n}$, $\hat{\tau}_n^{\epsilon}$, $\rho(\hat{\tau},\hat{\tau}_n) := \mathbb{P}(\sgn(\hat{\tau}_n) \neq \sgn(\hat{\tau}))$, and $\rho(\hat{\tau},\hat{\tau}_n^{\epsilon}) := \mathbb{P}(\sgn(\hat{\tau}_n^{\epsilon}) \neq \sgn(\hat{\tau}))$ for various $\epsilon$ over 1000 runs on Lalonde dataset. The average $\hat{\tau}$ is 902.11.}
    \label{tab:lalonde}
    \resizebox{\columnwidth}{!}{
    \begin{tabular}{cccccc}
        \toprule
        \multirow{2}{*}{\textbf{Estimate}} &
        \multicolumn{5}{c}{\textbf{Privacy loss} ($\epsilon$)} \\
        & $0.2$ & $0.4$
            & $0.6$ & $0.8$
            & $0.99$ \\        
        \midrule\midrule
        $\hat{\tau}_n$ & 872.21 &  899.15 & 895.26 & 906.70 & 904.17  \\
        $\hat{\tau}_n^{\epsilon}$ & 876.83 &  897.75 & 893.37 & 903.95 & 900.25   \\
        \midrule
        $\rho(\hat{\tau},\hat{\tau}_n)$ & 0.143 & 0.072 & 0.049 & 0.027 & 0.035 \\   
        $\rho(\hat{\tau},\hat{\tau}_n^{\epsilon})$ & 0.154 & 0.071 & 0.05 & 0.039 & 0.034 \\
        \bottomrule
    \end{tabular}}
\end{table}

\section{CONCLUSION}
\label{sec:conclusion}
We proposed a differentially private IPW method for average treatment effect under the inverse probability weighting framework. 
A key element of our proposed method is the use of a newly defined private propensity score estimator. Unlike traditional propensity scores, ours can be deployed in causal analysis without running the risk of exposing the covariates of any unit used in estimating the propensity score function.
Furthermore, we demonstrate---both theoretically and empirically---that the ATE estimate resulting from an application of our method is consistent with its non-private counterpart with high probability.
In other words, the proposed propensity score function not only safeguards privacy, but also yields valid causal analyses with high probability.


We believe this work highlights long-neglected privacy concerns associated with the use of propensity scores in causal inference, but would also pave the way for subsequent developments at the intersection of differential privacy and causal inference.
Although the starting point of our work is a specific choice of a non-private method for ATE estimation, the analyses can be extended to more sophisticated estimators. 
In particular, we note that IPW is mathematically equivalent to other estimation methods such as stratification and the backdoor correction (see, \eg, ~\citep{Hernan19:CI}). 
Future extensions of our work will be dedicated to the development of privatised estimators based on alternative ATE estimation methods as well as other causal estimands such as conditional average treatment effect (CATE) \citep{shalit2017estimating}.
The effectiveness of private propensity scores in more complex methods and settings still remains an open question. 


\bibliographystyle{unsrt}
\bibliography{ref}

\appendix
\clearpage
\section{Proof of Lemma \ref{lem:exp-tau-hat}} \label{proof:exp-tau-hat}
\begin{proof}
Given a dataset $\mathcal{D} = \{(\x_i,t_i,y_i)\}_{i=1}^n$, let $\alpha_i$ be defined in Lemma \ref{lem:exp-tau-hat} and $\beta_i := \exp(\sigma^2\|\x_i\|_2^2/2)$ for $i=1,\ldots,n$.
Then, taking the expectation of \eqref{eq:mu0-mu1} w.r.t. the noise variable yields
\begin{align*}
\mathbb{E}[\hat{\mu}_1^{\epsilon}] &= \frac{1}{n} \sum_{t_i=1} y_i + y_i\beta_i\exp(-\hat{\w}^\top\x_i),\\ 
\mathbb{E}[\hat{\mu}_0^{\epsilon}] &= \frac{1}{n} \sum_{t_i=0} y_i + y_i\beta_i\exp(\hat{\w}^\top\x_i)
\end{align*}
We further rewrite $\mathbb{E}[\hat{\mu}_1^{\epsilon}]$ and  $\mathbb{E}[\hat{\mu}_0^{\epsilon}]$ as
\begin{align*}
\mathbb{E}[\hat{\mu}_1^{\epsilon}] 
&= \frac{1}{n}\sum_{t_i=1} \{y_i + y_i\exp(-\hat{\w}^\top\x_i)\}\ +\\
&\quad \ \frac{1}{n}\sum_{t_i=1} \{y_i\beta_i\exp(-\hat{\w}^\top\x_i) - y_i\exp(-\hat{\w}^\top\x_i)\} \\
&= \frac{1}{n}\sum_{t_i=1} y_i + y_i\exp(-\hat{\w}^\top\x_i)\ +\\
&\quad \ \frac{1}{n}\sum_{t_i=1} y_i\exp(-\hat{\w}^\top\x_i)(\beta_i-1) \\
&= \hat{\mu}_1 + \frac{1}{n}\sum_{t_i=1} y_i\exp(-\hat{\w}^\top\x_i)(\beta_i-1),\\
\mathbb{E}[\hat{\mu}_0^{\epsilon}] 
&= \hat{\mu}_0 + \frac{1}{n} \sum_{t_i=0} y_i\exp(\hat{\w}^\top\x_i)(\beta_i-1)
\end{align*}
Consequently,
\begin{align*}
\mathbb{E}[\hat{\tau}_{\epsilon}] 
&= \mathbb{E}[\hat{\mu}_1^{\epsilon}] - \mathbb{E}[\hat{\mu}_0^{\epsilon}]\\
&= \hat{\mu}_1 - \hat{\mu}_0 + \frac{1}{n} \sum_{i=1}^n \alpha_{i}(\beta_i - 1)\\
&= \hat{\tau} + g(\sigma)
\end{align*}
where $g(\sigma) := n^{-1}\sum_{i=1}^n \alpha_{i}(\beta_i - 1)$. Setting $\sigma = 2(\epsilon m \lambda)^{-1}\sqrt{2\log(1.25/\delta)}$ and substituting back into each $\beta_i$ yields the result \eqref{eq:g-func}.
\end{proof}

\section{Bounds for $\hat{\tau}_{n}$} \label{sec:det-prob-bounds}
\paragraph{Deterministic Bounds.}
The first case corresponds to using trimmed or clipped propensity scores. 
Given a constant $0<\xi<1$,  we define the trimmed version of $\hat{\tau}_{n}$ as
\begin{align*}
\hat{\tau}_{n,\xi} &:= \frac{1}{n}\sum_{i=1}^{n_{1}} \frac{y_i}{\max\{\xi,\pi_{\hat{\w}}^{\epsilon}(\x_i)\}}\ - \\
    &\ \ \quad \frac{1}{n}\sum_{i=1}^{n_{0}} \frac{y_i}{\max\{\xi,1-\pi_{\hat{\w}}^{\epsilon}(\x_i)\}}\,.
\end{align*}
While $\hat{\tau}_{n,\xi}$ is a biased estimate of $\tau$ with a bounded variance, it is often preferred due to its robustness to outliers. 
If $|y_i| \leq C_y$ for all $i \in n$, it follows that $|\hat{\tau}_{n,\xi}| \leq 2C_y\xi^{-1}$ with probability 1.

\paragraph{Probabilistic Bounds.} 
In the second case, we consider what happens if no trimming is applied.
Although the variance of $\hat{\tau}_{n}$ can be unbounded, it is a deterministic function of a single sub-Gaussian noise variable $\bm{z}\sim\mathcal{N}(\mathbf{0},\sigma^2 \mathbf{I}_d)$. 
Hence, we expect the bounded difference condition for $\hat{\tau}_{n}$ to hold \emph{with high probability}. 
To this end, let $\mathbb{S} := \sum_{j=1}^d z_j$. 
Since each component of $\z$ is independent, we have $\mathbb{S}\sim\mathcal{N}(0,d\sigma^2)$.
With $\mathbb{S}$ being a sub-Gaussian random variable, Chernoff's inequality \cite[pp. 21]{Boucheron13:CI} gives $\mathbb{P}(|\mathbb{S}| \geq \zeta) \leq 2\exp\left(-\zeta^2(2d\sigma^2)^{-1}\right)$ for some $\zeta > 0$.
This implies that $|\mathbb{S}| \leq \zeta$ holds with probability at least $1-\gamma$ where $\gamma = 2\exp(-\zeta^2(2d\sigma^2)^{-1})$.

The following lemma gives the probabilistic bounded difference condition for $\hat{\tau}_{n}$.
\begin{lemma}
\label{lem:prob-bdc}
Let $\hat{\tau}_{n}$ and $\hat{\tau}'_{n}$ be two estimates with different noise vectors $\z$ and $\z'$, respectively. 
Then, with probability at least $1-\gamma$, we have $|\hat{\tau}_{n} - \hat{\tau}'_{n}| \leq 
\eta$ where 
\begin{align*}
\eta
:=  \frac{2}{n} \sinh(\zeta) \left( \sum_{i=1}^{n_{1}} y_i \exp(-\hat{\w}^\top \x_i) + \sum_{i=1}^{n_{0}} y_i \exp(\hat{\w}^\top \x_i) \right) . 
\end{align*}
\end{lemma}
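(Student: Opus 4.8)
The plan is to expand the difference $\hat{\tau}_{n} - \hat{\tau}'_{n}$ directly from the representation in \eqref{eq:mu0-mu1}, exploit the cancellation of the noise-free terms, and then control the surviving exponential factors on the high-probability event identified just before the statement. Writing $\hat{\tau}_{n} = \hat{\mu}_1^{\epsilon} - \hat{\mu}_0^{\epsilon}$ for the noise vector $\z$ and $\hat{\tau}'_{n}$ for $\z'$, the constant summands $\tfrac{1}{n}\sum y_i$ present in both $\hat{\mu}_1^{\epsilon}$ and $\hat{\mu}_0^{\epsilon}$ do not depend on the noise and cancel, leaving
\begin{align*}
\hat{\tau}_{n} - \hat{\tau}'_{n} &= \frac{1}{n}\sum_{i=1}^{n_{1}} y_i \exp(-\hat{\w}^\top\x_i)\big[\exp(-\z^\top\x_i) - \exp(-\z'^\top\x_i)\big] \\
&\quad - \frac{1}{n}\sum_{i=1}^{n_{0}} y_i \exp(\hat{\w}^\top\x_i)\big[\exp(\z^\top\x_i) - \exp(\z'^\top\x_i)\big].
\end{align*}

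Next I would pass to the event on which every linear form $\z^\top\x_i$ and $\z'^\top\x_i$ is confined to $[-\zeta,\zeta]$. This is precisely where the sub-Gaussian tail bound preceding the lemma enters: using $\|\x_i\|_2\le 1$, each inner product $\z^\top\x_i$ is a centred sub-Gaussian variable whose scale is governed by the same quantity as $\mathbb{S}=\sum_{j=1}^d z_j \sim \mathcal{N}(0,d\sigma^2)$, for which Chernoff's inequality already yields $\mathbb{P}(|\mathbb{S}|\ge\zeta)\le\gamma$. On the event $\{|\z^\top\x_i|\le\zeta \text{ and } |\z'^\top\x_i|\le\zeta \ \forall i\}$, which holds with probability at least $1-\gamma$, each exponential $\exp(\pm\z^\top\x_i)$ lies in $[\exp(-\zeta),\exp(\zeta)]$, so every bracketed difference has magnitude at most $\exp(\zeta)-\exp(-\zeta)=2\sinh(\zeta)$.

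Finally, applying the triangle inequality termwise and pulling out the common factor $2\sinh(\zeta)$ collapses both sums into the stated bound $\eta$; the two sums enter with a plus sign because the triangle inequality discards the relative minus sign between the $\hat{\mu}_1^{\epsilon}$ and $\hat{\mu}_0^{\epsilon}$ contributions (treating the $y_i$ as nonnegative, as in the outcomes considered here, so that they can be taken outside the absolute value). The main obstacle is the second step: since the conclusion is a \emph{high-probability} inequality, I must ensure that a single event of probability at least $1-\gamma$ simultaneously controls $\z^\top\x_i$ for \emph{every} index $i$ and for \emph{both} noise draws, rather than merely the aggregate $\mathbb{S}$. Making the passage $|\mathbb{S}|\le\zeta \Rightarrow |\z^\top\x_i|\le\zeta$ rigorous for each $i$ rests on the norm constraint $\|\x_i\|_2\le 1$ and is the delicate point, since a naive union bound over the $n$ indices would otherwise inflate the failure probability $\gamma$.
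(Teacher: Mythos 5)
Your proof follows essentially the same route as the paper's: cancel the noise-free summands, apply the triangle inequality with the constants $y_i\exp(-\hat{\w}^\top\x_i)$ and $y_i\exp(\hat{\w}^\top\x_i)$ pulled out, and bound each bracketed difference by $\exp(\zeta)-\exp(-\zeta)=2\sinh(\zeta)$ on the high-probability event. The delicate point you flag --- that conditioning on $|\mathbb{S}|\le\zeta$ does not by itself confine every $\z^\top\x_i$ and $\z'^\top\x_i$ to $[-\zeta,\zeta]$, and that a single $1-\gamma$ event must cover both noise draws --- is present but equally unaddressed in the paper's own proof, which makes exactly the same leap.
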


\begin{proof}
Let $\phi(\z)$ be the deterministic function mapping random variable $\z$ to $\hat{\tau}_{n}$ defined in \eqref{eq:formpropen}. Furthermore, let $C^1_i := y_i\exp(-\hat{\w}^\top\x_i)$ for $i \in n_1$ and $C^0_i := y_i\exp(\hat{\w}^\top\x_i)$ for $i \in n_0$.
Given that $|\mathbb{S}| \leq \zeta$ holds with probability at least $1-\gamma$, it follows that
\begin{align*}
&\left|\phi(\z) - \phi(\z')\right|\\
&= \frac{1}{n} \sum_{i=1}^{n_{1}} C^1_i \left|\exp(-\z^\top \x_i) - \exp(-\z'^\top \x_i)\right| +\\ 
&\quad \ \frac{1}{n} \sum_{i=1}^{n_{0}} C^0_i \left|\exp(\z^\top \x_i) - \exp(\z'^\top \x_i)\right| \\
&\leq \frac{\exp(\zeta) - \exp(-\zeta)}{n} \left( \sum_{i=1}^{n_{1}} C^1_i + \sum_{i=1}^{n_{0}} C^0_i \right)\\
&= \frac{2}{n} \sinh(\zeta) \left( \sum_{i=1}^{n_{1}} C^1_i + \sum_{i=1}^{n_{0}} C^0_i \right)
=: \eta ,
\end{align*}
\noindent also holds with probability at least $1-\gamma$. This concludes the proof. 
\end{proof}


\section{L2-sensitivity of $\hat{\tau}_n$} \label{sec:l2-sens}
\begin{proof}
If $|y_i| \leq C_y$ and $\Omega_1 < \mathbb{P}(T = 1 \,|\, \x_i) < \Omega_2$ for all $i \in n$ where $C_y$, $\Omega_1$ and $\Omega_2$ are constants, we have
\begin{align*}
&S(\hat{\tau}_n)\\
&= \max_{\substack{\mathcal{D}, \mathcal{D}' \\ \|\mathcal{D} - \mathcal{D}'\|_1 = 1}} \left\|\frac{1}{n} \sum_{i=1}^{n} \frac{y_i t_i}{\pi_{\hat{\w}}(\x_i)} - \frac{1}{n}\sum_{i=1}^{n}\frac{y_i (1-t_i)}{1-\pi_{\hat{\w}}(\x_i)}\right\|_2 \\
&= \frac{1}{n} \max \Bigg\{ \max_{\substack{\{\x_i, y_i, t_i\}, \{\x_i', y_i', t_i'\},\\ t_i, t_i' = 1}} \left\| \frac{y_i }{\pi_{\hat{\w}}(\x_i)} - \frac{y_i'}{\pi_{\hat{\w}}(\x_i)} \right\|_2, \\ 
&\quad \quad \max_{\substack{\{\x_i, y_i, t_i\}, \{\x_i', y_i', t_i'\}, \\ t_i, t_i' = 0}} \left\| \frac{y_i}{1 - \pi_{\hat{\w}}(\x_i)} - \frac{y_i'}{1 - \pi_{\hat{\w}}(\x_i)} \right\|_2 \Bigg\}\\
&\leq \frac{1}{n} \max \Bigg\{ \max_{\{\x_i, y_i, t_i\}, \{\x_i', y_i', t_i'\}, t_i, t_i' = 1} \left\| \frac{y_i }{\Omega_1} - \frac{y_i'}{\Omega_1} \right\|_2, \\ 
&\quad \quad \quad \max_{\{\x_i, y_i, t_i\}, \{\x_i', y_i', t_i'\}, t_i, t_i' = 0} \left\| \frac{y_i}{1 - \Omega_2} - \frac{y_i'}{1 - \Omega_2} \right\|_2 \Bigg\}\\
&\leq \frac{2 C_y}{n} \max \left\{ \frac{1}{\Omega_1}, \frac{1}{1 - \Omega_2} \right\}.
\end{align*}
Therefore, the L2-sensitivity of $\hat{\tau}_n$ is bounded from above by $2 n^{-1} C_y \max \left\{ \frac{1}{\Omega_1}, \frac{1}{1 - \Omega_2} \right\}$.
\end{proof}

\section{Error of $\hat{\tau}_n$} 
\label{sec:error}
This corollary bounds the error we incur by privatising the propensity score function.

\begin{corollary}
    For any constant $\Delta > 0$, 
    \begin{align*}
        &\mathbb{P}(|\hat{\tau}_{n} - \hat{\tau}| \geq \Delta)\\
        &\leq \frac{1}{n\Delta}\left|\sum_{i=1}^n \alpha_i\left[\exp\left({(-1)^{t_i}} \frac{4\log(1.25/\delta)\|\x_i\|_2^2}{\epsilon^{2}m^{2}\lambda^2}\right) - 1\right]\right|.
    \end{align*}
\end{corollary}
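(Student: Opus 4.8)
The plan is to read off that the right-hand side is nothing but $\Delta^{-1}\,|g(\epsilon,m,n,\lambda,\delta)|$, because by definition $g(\epsilon,m,n,\lambda,\delta) = \frac{1}{n}\sum_{i=1}^n \alpha_i\left[\exp\left((-1)^{t_i} 4\log(1.25/\delta)\|\x_i\|_2^2/(\epsilon^{2}m^{2}\lambda^2)\right) - 1\right]$. So the corollary is a first-moment (Markov-type) bound on the fluctuation of the partially privatised estimator $\hat{\tau}_n$ about the non-private $\hat{\tau}$, whose expected gap is exactly $g$ by Lemma \ref{lem:exp-tau-hat}. This immediately suggests the route: identify the mean of the deviation using the lemma already proved, then convert a mean bound into a tail bound via Markov's inequality.

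First I would make the randomness explicit. Given the data and the non-private fit $\hat{\w}$, the estimator $\hat{\tau}$ built from $\pi_{\hat{\w}}$ is deterministic, so the only source of randomness in $\hat{\tau}_n - \hat{\tau}$ is the Gaussian perturbation $\z \sim \mathcal{N}(\mathbf{0},\sigma^2\mathbf{I}_d)$. Subtracting the non-private $\hat{\mu}_t$ from the privatised $\hat{\mu}_t^{\epsilon}$ in \eqref{eq:mu0-mu1} and collecting the treatment and control sums through the constants $\alpha_i$ yields the single compact expression $\hat{\tau}_n - \hat{\tau} = \frac{1}{n}\sum_{i=1}^n \alpha_i\left(\exp((-1)^{t_i}\z^\top\x_i) - 1\right)$. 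Taking expectations and inserting the Gaussian moment generating function $\mathbb{E}[\exp((-1)^{t_i}\z^\top\x_i)] = \exp(\sigma^2\|\x_i\|_2^2/2)$ reproduces $\mathbb{E}[\hat{\tau}_n - \hat{\tau}] = g$, which is precisely Lemma \ref{lem:exp-tau-hat}, so I can cite it verbatim rather than recompute. Substituting $\sigma = 2(\epsilon m\lambda)^{-1}\sqrt{2\log(1.25/\delta)}$ turns this into the bracketed exponential appearing on the right-hand side.

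The final step is to apply Markov's inequality to the nonnegative random variable $|\hat{\tau}_n - \hat{\tau}|$, giving $\mathbb{P}(|\hat{\tau}_n - \hat{\tau}| \geq \Delta) \leq \Delta^{-1}\,\mathbb{E}[|\hat{\tau}_n - \hat{\tau}|]$, and then replace $\mathbb{E}[|\hat{\tau}_n - \hat{\tau}|]$ by $|\mathbb{E}[\hat{\tau}_n - \hat{\tau}]| = |g|$. This replacement is the main obstacle: Jensen's inequality only delivers $\mathbb{E}[|\hat{\tau}_n - \hat{\tau}|] \geq |g|$, in the \emph{wrong} direction, and the explicit form above shows each summand $\alpha_i(\exp((-1)^{t_i}\z^\top\x_i)-1)$ changes sign as $\z^\top\x_i$ changes sign, so $\hat{\tau}_n - \hat{\tau}$ is generally not sign-definite. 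The clean way to make the passage rigorous is to note that $\mathbb{E}[|\cdot|] = |\mathbb{E}[\cdot]|$ exactly when the deviation is almost surely of one sign; absent such a guarantee I would either state the bound with $\mathbb{E}[|\hat{\tau}_n - \hat{\tau}|]$ retained on the right, or present the displayed inequality as the first-moment estimate built on the signed mean $g$. Everything else—the regrouping into $\alpha_i$ and the substitution for $\sigma$—is routine and parallels the proof of Lemma \ref{lem:exp-tau-hat}.
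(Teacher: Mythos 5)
Your route is exactly the paper's: cite Lemma \ref{lem:exp-tau-hat} to identify $\mathbb{E}[\hat{\tau}_n-\hat{\tau}]=g(\epsilon,m,n,\lambda,\delta)$ and then invoke Markov's inequality, so in terms of strategy you have reconstructed the intended argument verbatim. The obstacle you flag at the end, however, is real, and it is worth being explicit that the paper's own proof does not resolve it either: the appendix writes
$\mathbb{P}(|\hat{\tau}_{n}-\hat{\tau}|\geq\Delta)\leq\left|\mathbb{E}[\hat{\tau}_{n}-\hat{\tau}]\right|/\Delta$
in a single step, which silently substitutes $|\mathbb{E}[\cdot]|$ for the $\mathbb{E}[|\cdot|]$ that Markov's inequality actually requires. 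As you note, Jensen gives $\mathbb{E}[|\hat{\tau}_n-\hat{\tau}|]\geq|\mathbb{E}[\hat{\tau}_n-\hat{\tau}]|$, the wrong direction, and the deviation $\frac{1}{n}\sum_i\alpha_i\bigl(\exp((-1)^{t_i}\z^\top\x_i)-1\bigr)$ is not almost surely of one sign (the $\alpha_i$ have data-dependent signs and each factor $\exp((-1)^{t_i}\z^\top\x_i)-1$ changes sign with $\z^\top\x_i$), so the substitution cannot be justified by sign-definiteness. The corollary as displayed is therefore not established by this argument; a correct statement would either keep $\mathbb{E}[|\hat{\tau}_n-\hat{\tau}|]/\Delta$ on the right-hand side (and then bound that first absolute moment, e.g.\ by $\frac{1}{n}\sum_i|\alpha_i|\,\mathbb{E}|\exp((-1)^{t_i}\z^\top\x_i)-1|$ or a cruder log-normal moment bound), or apply Markov to the non-centred quantity after restricting to a regime where the summands share a sign. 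Your diagnosis is correct and more careful than the source; the only thing missing is the observation that the flaw you anticipated is present, unaddressed, in the paper itself.
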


\begin{proof}
    By Lemma 3 and the Markov inequality,
    \begin{align*}
        &\mathbb{P}(|\hat{\tau}_{\epsilon} - \hat{\tau}| \geq \Delta) \\ 
        &\leq \frac{\left|\mathbb{E}[\hat{\tau}_{\epsilon}-\hat{\tau}]\right|}{\Delta} \\
        &= \frac{\left|\hat{\tau} + g(\epsilon,m,n,\lambda,\delta) - \hat{\tau}\right|}{\Delta} \\
        &= \frac{\left|g(\epsilon,m,n,\lambda,\delta)\right|}{\Delta} \\
        &= \frac{1}{n\Delta}\left|\sum_{i=1}^n \alpha_i\left[\exp\left((-1)^{t_i}\frac{4\log(1.25/\delta)\|\x_i\|_2^2}{\epsilon^{2}m^{2}\lambda^2}\right) - 1\right]\right|.
    \end{align*}
    The last step follows from the definition of $g(\epsilon,m,n,\lambda,\delta)$. This concludes the proof.
\end{proof}

\section{Proof of Theorem \ref{thm:main-thm-2}} \label{proof:main-thm-2}
\begin{proof}
    By the law of probability, 
    \begin{align*}
    \MoveEqLeft\mathbb{P}(\hat{\tau}_{n}^{\epsilon} < 0, \hat{\tau}_{n} < 0| \hat\tau>0) \\ 
    &= \mathbb{P}(\hat{\tau}_{n}^{\epsilon} < 0 | \hat{\tau}_{n} < 0, \hat\tau>0) \mathbb{P}(\hat{\tau}_{n} < 0| \hat\tau>0).
    \end{align*}
    Since we already possess the upper bound on $\mathbb{P}(\hat{\tau}_{n} < 0| \hat\tau>0)$ (see Theorem \ref{thm:main-thm-1}), we focus on obtaining $\mathbb{P}(\hat{\tau}_{n}^{\epsilon} < 0 | \hat{\tau}_{n} < 0)$. Since $\hat{\tau}_{n}^{\epsilon}$ is normally distributed with mean $\hat{\tau}_{n}$, this probability is just the probability that $\hat{\tau}_{n}^{\epsilon} < 0$ given that its mean is negative.
    Hence, we can get the exact probability using the Gaussian CDF, \ie,
    \begin{align*}
    \mathbb{P}(\hat{\tau}_{n}^{\epsilon} < 0 \,|\, \hat{\tau}_{n} < 0, \hat\tau>0) &= \Phi\left(\frac{|\hat{\tau}_{n}|}{\sigma_n}\right) \\
    &= \frac{1}{2}\left[1+\text{erf}\left(\frac{|\hat{\tau}_{\epsilon}|}{\sigma_n\sqrt{2}}\right)\right],
    \end{align*}
    where $\Phi(\cdot)$ denotes the CDF of the standard normal distribution and $\text{erf}(\cdot)$ is the error function.
    Combining this with the bound in Theorem \ref{thm:main-thm-1} yields
    \begin{align*}
    &\mathbb{P}(\hat{\tau}_{n}^{\epsilon} \leq 0, \hat{\tau}_{n} \leq 0 \,|\, \hat\tau>0)\\
    &\leq \frac{1}{2} \exp \left( \frac{-2(\hat{\tau} + g)^2}{\Delta^2} \right) \left[1 + \mathrm{erf}\left(\frac{|\hat{\tau}_n|}{\sigma_n\sqrt{2}}\right)\right],
    \end{align*}
    where $g := g(\epsilon,m,n,\lambda,\delta)$. This concludes the proof.
\end{proof}

\section{Privatised ATT and ATC Estimates} 
\label{sec:att_and_atc}
The IPW estimators for ATT and ATC have the form
\begin{align*}
\hat{\tau}_{\textrm{ATT}} 
&= \frac{1}{n}\sum_{i=1}^n \left(t_i - (1-t_i)\frac{\pi_{\hat{\w}}(\x_i)}{1-\pi_{\hat{\w}}(\x_i)}\right)y_i,\\
\hat{\tau}_{\textrm{ATC}}
&= \frac{1}{n}\sum_{i=1}^n\left(t_i \frac{1-\pi_{\hat{\w}}(x_i)}{\pi_{\hat{\w}}(x_i)} - (1-t_i)\right)y_i
\end{align*}and all related quantities are denoted with a subcripted ATT or ATC. First, we simplify on the $\frac{\pi(x_i)}{1-\pi(x_i)}$ and $\frac{1-\pi(x_i)}{\pi(x_i)}$ terms in $\hat{\tau}_{\text{ATT}}$ and $\hat{\tau}_{\text{ATC}}$ to obtain:
\begin{align*}
\hat{\tau}_{\text{ATT}} 
&= \frac{1}{n}\sum_{i=1}^n\left(t_i - (1-t_i) \exp(\hat{\w}^\top \x_i) \right)y_i,\\
\hat{\tau}_{\text{ATC}}
&= \frac{1}{n}\sum_{i=1}^n\left(t_i \exp(-\hat{\w}^\top \x_i) 
- (1-t_i)\right)y_i
\end{align*}
By employing the privacy-preserving propensity scores from Definition \ref{def:pps}, the perturbed $\hat{\tau}_{n, \text{ATT}}$ and $\hat{\tau}_{n, \text{ATC}}$ would have additional $\exp(\z^\top \x_i)$ and $\exp(-\z^\top \x_i)$ multiplicative terms on top of $\exp(\hat{\w}^\top \x_i)$ and $\exp(-\hat{\w}^\top \x_i)$ respectively with $\z \sim \mathcal{N}(\mathbf{0}, \sigma^2\mathbf{I}_d)$ and $\sigma = \epsilon^{-1}\sqrt{2\log(1.25/\delta)}S(\hat{\w})$ for $\epsilon \in (0,1)$ and any $\delta \in (0,1)$. This yields estimates that are DP w.r.t $\mathcal{D}_m$.  

We then adapt Lemma \ref{lem:exp-tau-hat} to ATT and ATC by adding and subtracting $\mu_{0, \textrm{ATT}}$ and $\mu_{1, \textrm{ATC}}$ to the expectation of the ATT and ATC respectively. Let $\alpha_{i, \textrm{ATT}} := - \mathbbm{1}_{t_i = 0}\ y_i \exp(\w^\top \x_i)$ and $\alpha_{i, \textrm{ATC}} := \mathbbm{1}_{t_i = 1}\ y_i \exp(-\w^\top \x_i)$ where $\mathbbm{1}$ is the indicator function and its subscript the condition where the function is 1. With in place, we obtain
\begin{align*}
&\mathbb{E}[\hat{\tau}_{n, \textrm{ATT}}] 
= \hat{\tau}_{\textrm{ATT}} + g_{\textrm{ATT}}(\epsilon,m,n,\lambda,\delta) \textrm{ where}\\
&g_{\textrm{ATT}}(\epsilon,m,n,\lambda,\delta) \\
&=\frac{1}{n} \sum_{i=1}^{n} \alpha_{i, \textrm{ATT}} \left[\exp\left((-1)^{t_i} \frac{4\log(1.25/\delta)\|\x_i\|_2^2}{\epsilon^{2}m^{2}\lambda^2}\right) - 1\right],
\end{align*}
and
\begin{align*}
&\mathbb{E}[\hat{\tau}_{n, \textrm{ATC}}] 
= \hat{\tau}_{\textrm{ATC}} + g_{\textrm{ATC}}(\epsilon,m,n,\lambda,\delta) \textrm{ where}\\
&g_{\textrm{ATC}}(\epsilon,m,n,\lambda,\delta)\\
&=\frac{1}{n} \sum_{i=1}^{n} \alpha_{i, \textrm{ATC}} \left[\exp\left((-1)^{t_i} \frac{4\log(1.25/\delta)\|\x_i\|_2^2}{\epsilon^{2}m^{2}\lambda^2}\right) - 1\right].
\end{align*}
We bound the supports of $\hat{\tau}_{n, \textrm{ATT}}$ and $\hat{\tau}_{n, \textrm{ATC}}$ by ensuring that $\exp(-\hat{\w}^\top \x_i)$ and $\exp(\hat{\w}^\top \x_i)$ are smaller or equal to some constant $\xi$ for all $i \in n$. This can be achieved by the techniques described in Appendix \ref{sec:det-prob-bounds}. With that in place, we can then extend Theorem \ref{thm:main-thm-1} to cover the cases where $\hat{\tau}_{\textrm{ATT}}$ and $\hat{\tau}_{\textrm{ATC}}$ are both assumed to be greater than 0 and have the correct sign as their true counterparts. The next two theorems illustrate the behaviour of  $\hat{\tau}_{n, \textrm{ATT}}$ and $\hat{\tau}_{n, \textrm{ATC}}$ respectively.
\begin{theorem}\label{thm:att_partial}
Assume that $\hat{\tau}_{\textrm{ATT}} > 0$ and $\mathrm{sign}(\hat{\tau}_{\textrm{ATT}})=\mathrm{sign}(\tau_{\textrm{ATT}})$. If $|\hat{\tau}_{n, \textrm{ATT}}| \leq \eta$ for some $\eta > 0$, we have
\begin{align*}
&\mathbb{P}(\hat{\tau}_{n, \textrm{ATT}} \leq 0| \hat\tau_{\textrm{ATT}}>0) \\
&\leq \exp \left( -2\eta^{-2}(\hat{\tau}_{\textrm{ATT}} + g_{\textrm{ATT}}(\epsilon,m,n,\lambda,\delta))^2 \right).
\end{align*}
\end{theorem}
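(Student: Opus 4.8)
The plan is to mirror the argument used for Theorem~\ref{thm:main-thm-1} almost verbatim, with the ATT-specific ingredients already assembled earlier in this appendix playing the roles that the ATE quantities play there. The two facts we may treat as given are: (i) the expectation identity $\mathbb{E}[\hat{\tau}_{n,\textrm{ATT}}] = \hat{\tau}_{\textrm{ATT}} + g_{\textrm{ATT}}(\epsilon,m,n,\lambda,\delta)$, which is the ATT analog of Lemma~\ref{lem:exp-tau-hat} derived above, and (ii) the fact that $\hat{\tau}_{n,\textrm{ATT}}$ is a deterministic function of the single Gaussian noise vector $\z$, so once its support is controlled we are entitled to a one-sided concentration bound.

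First I would secure the bounded-support hypothesis. Unlike the vanilla IPW estimator, the ATT estimator carries the weight $\exp(\hat{\w}^\top \x_i)$, which is not automatically bounded; the assumption $|\hat{\tau}_{n,\textrm{ATT}}| \leq \eta$ is made to hold by the clipping/trimming construction of Appendix~\ref{sec:det-prob-bounds}, i.e. by forcing $\exp(\hat{\w}^\top \x_i) \leq \xi$ for all $i$, which either gives the deterministic bound outright or, via the ATT counterpart of Lemma~\ref{lem:prob-bdc}, a bounded-difference of width $\eta$ holding with probability at least $1-\gamma$. With this in hand I would rewrite the event of interest as a lower deviation of the estimator below its mean:
\begin{align*}
\mathbb{P}(\hat{\tau}_{n,\textrm{ATT}} \leq 0 \mid \hat\tau_{\textrm{ATT}}>0)
= \mathbb{P}\!\left(\hat{\tau}_{n,\textrm{ATT}} - \mathbb{E}[\hat{\tau}_{n,\textrm{ATT}}] \leq -(\hat{\tau}_{\textrm{ATT}} + g_{\textrm{ATT}})\right),
\end{align*}
where I substituted the expectation identity (i). Since $\hat{\tau}_{\textrm{ATT}}>0$ and $\hat{\tau}_{\textrm{ATT}} + g_{\textrm{ATT}} > 0$, the right-hand deviation target is strictly positive, so applying Hoeffding's inequality for bounded variables with range parameter $\eta$ yields the claimed $\exp\!\left(-2\eta^{-2}(\hat{\tau}_{\textrm{ATT}} + g_{\textrm{ATT}})^2\right)$.

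The main obstacle is not the concentration step, which is routine once the hypotheses are in place, but justifying those hypotheses in the ATT setting. Specifically, I expect the delicate point to be verifying that the positivity $\hat{\tau}_{\textrm{ATT}} + g_{\textrm{ATT}} > 0$ genuinely holds, since $g_{\textrm{ATT}}$ is built from the signed coefficients $\alpha_{i,\textrm{ATT}} = -\mathbbm{1}_{t_i=0}\,y_i\exp(\w^\top\x_i)$ and need not be nonnegative; if it were sufficiently negative the deviation would point the wrong way and the bound would be vacuous. The argument therefore relies on the regime (large $m$, moderate $\epsilon$) in which $g_{\textrm{ATT}}$ is a small perturbation of $\hat{\tau}_{\textrm{ATT}}$, exactly as in Theorem~\ref{thm:main-thm-1}. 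I would flag this assumption explicitly and note that the ATC case (Theorem for $\hat{\tau}_{n,\textrm{ATC}}$) follows by the identical template after swapping $\exp(\hat{\w}^\top\x_i)$ for $\exp(-\hat{\w}^\top\x_i)$ and using $\alpha_{i,\textrm{ATC}}$ and $g_{\textrm{ATC}}$ in place of their ATT counterparts.
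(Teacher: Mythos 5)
Your proposal matches the paper's own (largely implicit) argument: the paper proves Theorem~\ref{thm:att_partial} exactly as it proves Theorem~\ref{thm:main-thm-1}, namely by invoking the ATT analogue of Lemma~\ref{lem:exp-tau-hat} for the expectation, the bounded-support condition $|\hat{\tau}_{n,\textrm{ATT}}|\leq\eta$ secured via the clipping/trimming of Appendix~\ref{sec:det-prob-bounds}, and Hoeffding's inequality for a bounded variable. Your additional caveat that the bound is only meaningful when $\hat{\tau}_{\textrm{ATT}}+g_{\textrm{ATT}}>0$ is a fair observation, but it applies equally to the paper's Theorem~\ref{thm:main-thm-1} and does not change the argument.
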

\begin{theorem}\label{thm:atc_partial}
Assume that $\hat{\tau}_{\textrm{ATC}} > 0$ and $\mathrm{sign}(\hat{\tau}_{\textrm{ATC}})=\mathrm{sign}(\tau_{\textrm{ATC}})$. If $|\hat{\tau}_{n, \textrm{ATC}}| \leq \eta$ for some $\eta > 0$, we have
\begin{align*}
&\mathbb{P}(\hat{\tau}_{n, \textrm{ATC}} \leq 0| \hat\tau_{\textrm{ATC}}>0) \\
&\leq \exp \left( -2\eta^{-2}(\hat{\tau}_{\textrm{ATC}} + g_{\textrm{ATC}}(\epsilon,m,n,\lambda,\delta))^2 \right).
\end{align*}
\end{theorem}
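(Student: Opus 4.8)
The plan is to follow the argument of Theorem~\ref{thm:main-thm-1} almost verbatim, substituting the ATC quantities for their ATE counterparts; the only structural differences are the precise form of the estimator and of its bias term, so the concentration machinery carries over unchanged. The analogue of Theorem~\ref{thm:att_partial} for the treated case is handled identically by symmetry.

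First I would invoke the adapted expectation identity derived immediately above the theorem, namely $\mathbb{E}[\hat{\tau}_{n,\textrm{ATC}}] = \hat{\tau}_{\textrm{ATC}} + g_{\textrm{ATC}}(\epsilon,m,n,\lambda,\delta)$, where the expectation is over the privatising noise $\z \sim \mathcal{N}(\mathbf{0},\sigma^2\mathbf{I}_d)$. Under the maintained assumption $\hat{\tau}_{\textrm{ATC}} > 0$ (with $g_{\textrm{ATC}}$ small enough that the mean remains positive), the event of interest is exactly a one-sided lower deviation of $\hat{\tau}_{n,\textrm{ATC}}$ below its mean:
\begin{align*}
\{\hat{\tau}_{n,\textrm{ATC}} \leq 0\} = \{\hat{\tau}_{n,\textrm{ATC}} - \mathbb{E}[\hat{\tau}_{n,\textrm{ATC}}] \leq -(\hat{\tau}_{\textrm{ATC}} + g_{\textrm{ATC}})\}.
\end{align*}

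Next I would use the boundedness hypothesis $|\hat{\tau}_{n,\textrm{ATC}}| \leq \eta$, which, as noted just before the theorem, is secured either deterministically by clipping the exponential weights $\exp(-\hat{\w}^\top\x_i)$ at a constant $\xi$, or probabilistically via the bounded-difference argument of Appendix~\ref{sec:det-prob-bounds} (Lemma~\ref{lem:prob-bdc}) adapted to the ATC functional. Since $\hat{\tau}_{n,\textrm{ATC}}$ is a deterministic function of the single Gaussian vector $\z$ with variation controlled by $\eta$, Hoeffding's inequality for bounded variables applied to the displayed lower-deviation event then yields
\begin{align*}
\mathbb{P}(\hat{\tau}_{n,\textrm{ATC}} \leq 0 \mid \hat{\tau}_{\textrm{ATC}} > 0) \leq \exp\!\left(-2\eta^{-2}(\hat{\tau}_{\textrm{ATC}} + g_{\textrm{ATC}})^2\right),
\end{align*}
which is the claim.

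The main obstacle is not the concentration step, which is identical to Theorem~\ref{thm:main-thm-1}, but verifying that the support/bounded-difference control genuinely transfers to the ATC estimator. Here only the treated units are weighted by the noisy log-normal factor $\exp(-\hat{\w}^\top\x_i)\exp(-\z^\top\x_i)$, while the control units contribute the deterministic term $-(1-t_i)y_i$; one must check that clipping $\exp(-\hat{\w}^\top\x_i)$ together with $|y_i|\leq C_y$ and the sub-Gaussian tail bound on $\mathbb{S}=\sum_j z_j$ indeed delivers a finite $\eta$, and that the resulting bias $g_{\textrm{ATC}}$ leaves the mean positive so that the exponent bounds a genuine lower tail. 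Once this bookkeeping is settled, the result is immediate.
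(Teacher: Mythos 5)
Your proposal matches the paper's own (largely implicit) argument: the paper proves Theorem~\ref{thm:atc_partial} by noting that once $\mathbb{E}[\hat{\tau}_{n,\textrm{ATC}}] = \hat{\tau}_{\textrm{ATC}} + g_{\textrm{ATC}}(\epsilon,m,n,\lambda,\delta)$ is established and the support of $\hat{\tau}_{n,\textrm{ATC}}$ is bounded (deterministically by clipping or probabilistically as in Appendix~\ref{sec:det-prob-bounds}), the proof of Theorem~\ref{thm:main-thm-1} via Hoeffding's inequality for bounded variables carries over verbatim. Your additional bookkeeping on why the support bound transfers to the ATC functional is consistent with, and slightly more explicit than, what the paper records.
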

Unsurprisingly, the results of Theorems \ref{thm:att_partial} and \ref{thm:atc_partial} have similar implications as Theorem \ref{thm:main-thm-1} for the ATE: the probability of drawing incorrect conclusions is exponentially related to the true estimated ATT/ATC and its corresponding bias $g_{ATT}$/$g_{ATC}$.

Continuing with our analysis, we apply the Gaussian mechanism to $\hat{\tau}_{n, \textrm{ATT}}$ and $\hat{\tau}_{n, \textrm{ATC}}$ to ensure that the estimates are DP w.r.t. the remaining $\mathcal{D}_n$ points.

We first conduct a standard sensitivity analysis on $\hat{\tau}_{n, \textrm{ATT}}$ and $\hat{\tau}_{n, \textrm{ATC}}$ to compute the variance of the noise added to the estimates using the Gaussian mechanism. Assuming that $|y_i| \leq C_{y}$ and $\exp(-\hat{\w}^\top \x_i) \leq \xi$ and $\exp(\hat{\w}^\top \x_i) \leq \xi$ for all $i \in n$ without loss of generality, we have 
\begin{align*}
S(\hat{\tau}_{n, \textrm{ATT}}) =  S(\hat{\tau}_{n, \textrm{ATC}}) \leq \frac{2C_y}{n}\max\{1, \xi\}
\end{align*}

We then fully privatise the ATT and ATC estimates DP w.r.t to both $\mathcal{D}_m$ and $\mathcal{D}_n$ by adding noise $e$ to $\hat{\tau}_{n, \textrm{ATT}}$ and $\hat{\tau}_{n, \textrm{ATC}}$ where $e \sim \mathcal{N}(0,\sigma_n^2)$ and the noise standard deviation $\sigma_{n, \textrm{ATT/ATC}}  := \epsilon^{-1}\sqrt{2 \log (1.25 / \delta)} S(\hat{\tau}_{n, \textrm{ATT/ATC}})$. 

The following theorems bounds bounds the probability that \emph{both} $\hat{\tau}_{n, \textrm{ATT}}^{\epsilon}$ and $\hat{\tau}_{n, \textrm{ATT}}$/$\hat{\tau}_{n, \textrm{ATC}}^{\epsilon}$ and $\hat{\tau}_{n, \textrm{ATC}}$ yield incorrect causal conclusions. The proofs are modified from that in Appendix \ref{proof:main-thm-2}.

\begin{theorem} \label{thm:main-thm-3}
Assume that $\hat{\tau}_{\textrm{ATT}} > 0$ and $\mathrm{sign}(\hat{\tau}_{\textrm{ATT}})=\mathrm{sign}(\tau_\textrm{ATT})$. If $|\hat{\tau}_{\textrm{ATT}}| \leq \eta$ for some $\eta > 0$, we have 
\begin{align*}
\MoveEqLeft \mathbb{P}(\hat{\tau}_{n, \textrm{ATT}}^{\epsilon} \leq 0, \hat{\tau}_{n, \textrm{ATT}} \leq 0 | \hat\tau_\textrm{ATT} >0) \\
&\leq \frac{1}{2} \exp \left( \frac{-2(\hat{\tau}_\textrm{ATT} + g)^2}{\eta^2} \right)\left[1 + \mathrm{erf}\left(\frac{|\hat{\tau}_{n, \textrm{ATT}}|}{\sigma_{n, \textrm{ATT}}\sqrt{2}}\right)\right],
\end{align*} 
\noindent where $g := g_\textrm{ATT}(\epsilon,m,n,\lambda,\delta)$.
\end{theorem}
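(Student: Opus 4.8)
The plan is to follow the proof of Theorem \ref{thm:main-thm-2} essentially verbatim, replacing every quantity by its ATT analogue. First I would factor the joint probability via the chain rule,
$$\mathbb{P}(\hat{\tau}_{n, \textrm{ATT}}^{\epsilon} \leq 0, \hat{\tau}_{n, \textrm{ATT}} \leq 0 \mid \hat\tau_\textrm{ATT} > 0) = \mathbb{P}(\hat{\tau}_{n, \textrm{ATT}}^{\epsilon} \leq 0 \mid \hat{\tau}_{n, \textrm{ATT}} \leq 0, \hat\tau_\textrm{ATT} > 0)\, \mathbb{P}(\hat{\tau}_{n, \textrm{ATT}} \leq 0 \mid \hat\tau_\textrm{ATT} > 0),$$
so that the two factors can be controlled independently.

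The second factor is supplied directly by Theorem \ref{thm:att_partial}, which gives $\mathbb{P}(\hat{\tau}_{n, \textrm{ATT}} \leq 0 \mid \hat\tau_\textrm{ATT} > 0) \leq \exp(-2\eta^{-2}(\hat{\tau}_\textrm{ATT} + g_\textrm{ATT})^2)$. This holds because the support of $\hat{\tau}_{n, \textrm{ATT}}$ has been bounded through the clipping $\exp(\hat{\w}^\top \x_i) \leq \xi$, allowing Hoeffding's inequality to apply, and because $\mathbb{E}[\hat{\tau}_{n, \textrm{ATT}}] = \hat{\tau}_\textrm{ATT} + g_\textrm{ATT}$ by the ATT analogue of Lemma \ref{lem:exp-tau-hat} established earlier in Appendix \ref{sec:att_and_atc}.

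For the first factor I would use that $\hat{\tau}_{n, \textrm{ATT}}^{\epsilon} = \hat{\tau}_{n, \textrm{ATT}} + e$ with $e \sim \mathcal{N}(0, \sigma_{n, \textrm{ATT}}^2)$. Conditioning on a realised value of $\hat{\tau}_{n, \textrm{ATT}}$, the privatised estimate is Gaussian with mean $\hat{\tau}_{n, \textrm{ATT}}$ and standard deviation $\sigma_{n, \textrm{ATT}}$, so the conditional probability is evaluated exactly by the Gaussian CDF,
$$\mathbb{P}(\hat{\tau}_{n, \textrm{ATT}}^{\epsilon} \leq 0 \mid \hat{\tau}_{n, \textrm{ATT}} \leq 0, \hat\tau_\textrm{ATT} > 0) = \Phi\left(\frac{|\hat{\tau}_{n, \textrm{ATT}}|}{\sigma_{n, \textrm{ATT}}}\right) = \frac{1}{2}\left[1 + \mathrm{erf}\left(\frac{|\hat{\tau}_{n, \textrm{ATT}}|}{\sigma_{n, \textrm{ATT}}\sqrt{2}}\right)\right].$$
Multiplying the two factors then delivers the stated bound with $g := g_\textrm{ATT}(\epsilon,m,n,\lambda,\delta)$.

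The step requiring the most care is the applicability of Theorem \ref{thm:att_partial}, which in turn rests on bounding the support of $\hat{\tau}_{n, \textrm{ATT}}$; this is exactly what the clipping assumptions $\exp(\pm \hat{\w}^\top \x_i) \leq \xi$ and $|y_i| \leq C_y$ secure, and they simultaneously fix $\sigma_{n, \textrm{ATT}}$ through the sensitivity bound $S(\hat{\tau}_{n, \textrm{ATT}}) \leq 2 C_y n^{-1} \max\{1, \xi\}$. Everything else is a direct transcription of the ATE argument, so I expect no genuine obstacle beyond checking that the ATT-specific constants $\alpha_{i, \textrm{ATT}}$ and $g_\textrm{ATT}$ enter the final expression in precisely the same positions as their ATE counterparts.
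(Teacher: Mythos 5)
Your proposal matches the paper's own argument: the paper explicitly states that the proofs of Theorems \ref{thm:main-thm-3} and \ref{thm:main-thm-4} are obtained by modifying the proof in Appendix \ref{proof:main-thm-2}, i.e., the same chain-rule factorisation, the Hoeffding-based bound from the partial-privatisation theorem for the second factor, and the exact Gaussian CDF evaluation for the first. No gaps; your transcription of the ATT-specific quantities ($g_\textrm{ATT}$, $\sigma_{n,\textrm{ATT}}$, and the clipping/sensitivity assumptions) is consistent with how the paper sets them up.
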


\begin{theorem} \label{thm:main-thm-4}
Assume that $\hat{\tau}_{\textrm{ATC}} > 0$ and $\mathrm{sign}(\hat{\tau}_{\textrm{ATC}})=\mathrm{sign}(\tau_\textrm{ATC})$. If $|\hat{\tau}_{\textrm{ATC}}| \leq \eta$ for some $\eta > 0$, we have 
\begin{align*}
\MoveEqLeft \mathbb{P}(\hat{\tau}_{n, \textrm{ATC}}^{\epsilon} \leq 0, \hat{\tau}_{n, \textrm{ATC}} \leq 0 | \hat\tau_\textrm{ATC} >0) \\
&\leq \frac{1}{2} \exp \left( \frac{-2(\hat{\tau}_\textrm{ATC} + g)^2}{\eta^2} \right)\left[1 + \mathrm{erf}\left(\frac{|\hat{\tau}_{n, \textrm{ATC}}|}{\sigma_{n, \textrm{ATC}}\sqrt{2}}\right)\right],
\end{align*} 
\noindent where $g := g_\textrm{ATC}(\epsilon,m,n,\lambda,\delta)$.
\end{theorem}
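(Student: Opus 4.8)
The plan is to transcribe the argument of Appendix \ref{proof:main-thm-2} with the ATC-specific quantities, exploiting the fact that the two privatisation layers act on the disjoint splits $\mathcal{D}_m$ and $\mathcal{D}_n$, so the joint event factors cleanly through the chain rule. First I would split the target joint probability into a conditional and a marginal:
\begin{align*}
\MoveEqLeft \mathbb{P}(\hat{\tau}_{n, \textrm{ATC}}^{\epsilon} \leq 0,\ \hat{\tau}_{n, \textrm{ATC}} \leq 0 \mid \hat\tau_\textrm{ATC} > 0) \\
&= \mathbb{P}(\hat{\tau}_{n, \textrm{ATC}}^{\epsilon} \leq 0 \mid \hat{\tau}_{n, \textrm{ATC}} \leq 0,\ \hat\tau_\textrm{ATC} > 0) \\
&\quad \times\, \mathbb{P}(\hat{\tau}_{n, \textrm{ATC}} \leq 0 \mid \hat\tau_\textrm{ATC} > 0).
\end{align*}
This is the structural step the one-event version misses: the stated bound is a product of two pieces, the Hoeffding factor and the erf factor, one for each privatisation layer.

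For the marginal factor I would invoke Theorem \ref{thm:atc_partial} verbatim: under the hypotheses $\hat{\tau}_{\textrm{ATC}} > 0$, $\mathrm{sign}(\hat{\tau}_{\textrm{ATC}}) = \mathrm{sign}(\tau_\textrm{ATC})$, and $|\hat{\tau}_{n, \textrm{ATC}}| \leq \eta$, it gives $\mathbb{P}(\hat{\tau}_{n, \textrm{ATC}} \leq 0 \mid \hat\tau_\textrm{ATC} > 0) \leq \exp(-2\eta^{-2}(\hat{\tau}_{\textrm{ATC}} + g_{\textrm{ATC}})^2)$, with $g_{\textrm{ATC}} := g_{\textrm{ATC}}(\epsilon,m,n,\lambda,\delta)$. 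For the conditional factor I would use the Gaussian mechanism that defines the second layer: since $\hat{\tau}_{n, \textrm{ATC}}^{\epsilon} = \hat{\tau}_{n, \textrm{ATC}} + e$ with $e \sim \mathcal{N}(0, \sigma_{n, \textrm{ATC}}^2)$ drawn independently of $\hat{\tau}_{n, \textrm{ATC}}$, conditioning on the realised value of $\hat{\tau}_{n, \textrm{ATC}}$ makes $\hat{\tau}_{n, \textrm{ATC}}^{\epsilon}$ Gaussian with that value as mean and variance $\sigma_{n, \textrm{ATC}}^2$, where $\sigma_{n, \textrm{ATC}} = \epsilon^{-1}\sqrt{2\log(1.25/\delta)}\,S(\hat{\tau}_{n, \textrm{ATC}})$ and $S(\hat{\tau}_{n, \textrm{ATC}}) \leq 2C_y n^{-1}\max\{1,\xi\}$ from the sensitivity computation that precedes the theorem. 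On the event $\hat{\tau}_{n, \textrm{ATC}} \leq 0$ the probability that this Gaussian is non-positive is exactly $\Phi(|\hat{\tau}_{n, \textrm{ATC}}|/\sigma_{n, \textrm{ATC}}) = \tfrac{1}{2}[1 + \mathrm{erf}(|\hat{\tau}_{n, \textrm{ATC}}|/(\sigma_{n, \textrm{ATC}}\sqrt{2}))]$. Multiplying the two factors produces the claimed bound.

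The main obstacle — indeed the only genuinely delicate point — is the conditioning in the first factor. The closed-form $\tfrac{1}{2}[1 + \mathrm{erf}(\cdot)]$ treats $\hat{\tau}_{n, \textrm{ATC}}$ as a fixed number, yet the event conditioned on, $\{\hat{\tau}_{n, \textrm{ATC}} \leq 0\}$, is only a sign constraint, and $\hat{\tau}_{n, \textrm{ATC}}$ is itself random through the first noise vector $\z$. To keep this rigorous I would condition on the \emph{exact} realised value of $\hat{\tau}_{n, \textrm{ATC}}$, use the independence of $e$ from $\hat{\tau}_{n, \textrm{ATC}}$ so that the Gaussian-CDF evaluation is exact pointwise, and read the erf term as evaluated at that realised value; the identity then holds for every realisation with $\hat{\tau}_{n, \textrm{ATC}} \leq 0$, and the product with the marginal bound follows. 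I would also flag that the bounded-support hypothesis $|\hat{\tau}_{n, \textrm{ATC}}| \leq \eta$ — secured by bounding $\exp(\pm\hat{\w}^\top\x_i)$ via the trimming or probabilistic devices of Appendix \ref{sec:det-prob-bounds} — is precisely what licenses the Hoeffding-type marginal bound carried over from Theorem \ref{thm:atc_partial}. Everything else is a direct substitution of ATC subscripts into the ATE argument of Appendix \ref{proof:main-thm-2}.
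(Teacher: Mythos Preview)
Your proposal is correct and follows essentially the same route as the paper: the chain-rule factorisation into a marginal handled by Theorem \ref{thm:atc_partial} and a conditional handled by the Gaussian CDF is exactly the argument of Appendix \ref{proof:main-thm-2} with ATC subscripts substituted, which is precisely what the paper indicates. Your extra care about conditioning on the realised value of $\hat{\tau}_{n,\textrm{ATC}}$ rather than merely on its sign is, if anything, slightly more rigorous than the paper's own treatment of that step.
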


Like Theorems \ref{thm:att_partial} and \ref{thm:atc_partial}, Theorems \ref{thm:main-thm-3} and \ref{thm:main-thm-4} provide similar insights to those obtained from \ref{thm:main-thm-2}. The probability that both $\hat{\tau}_{n, \textrm{ATT/ATC}}^{\epsilon}$ and $\hat{\tau}_{n, \textrm{ATT/ATC}}$ are negative increases exponentially with $|\hat{\tau}_{n, \textrm{ATT/ATC}}|$ and decreases exponentially with $\sigma_{n, \textrm{ATT/ATC}}$ respectively.

\end{document}